\def\equationautorefname~#1\null{(#1)\null}
\setlist[enumerate]{leftmargin=.5in}
\setlist[itemize]{leftmargin=.5in}
\newtheorem{definition}{Definition}
\newtheorem{lemma}{Lemma}
\newtheorem{theorem}{Theorem}
\newtheorem{proof}{Proof}
\crefname{lemma}{lemma}{lemmas}
\Crefname{lemma}{Lemma}{Lemmas}
\title{Sparsity-aware generalization theory for deep neural networks}
\author{Ramchandran Muthukumar\thanks{Department of Computer Science \& Mathematical Institute for Data Science, Johns Hopkins University (\texttt{rmuthuk1@jhu.edu}).}
\and Jeremias Sulam\stepcounter{footnote}\stepcounter{footnote}\stepcounter{footnote}\stepcounter{footnote}\stepcounter{footnote}\thanks{Department of Biomedical Engineering  \& Mathematical Institute for Data Science,  Johns Hopkins University (\texttt{jsulam1@jhu.edu}).}
}
\date{}
\newcommand*{\addFileDependency}[1]{
  \typeout{(#1)}
  \@addtofilelist{#1}
  \IfFileExists{#1}{}{\typeout{No file #1.}}
}
\begin{document}

\maketitle

\begin{abstract}
Deep artificial neural networks achieve surprising generalization abilities that remain poorly understood. In this paper, we present a new approach to analyzing generalization for deep feed-forward ReLU networks that takes advantage of the degree of sparsity that is achieved in the hidden layer activations. By developing a framework that accounts for this reduced effective model size for each input sample, we are able to show fundamental trade-offs between sparsity and generalization. Importantly, our results make no strong assumptions about the degree of sparsity achieved by the model, and it improves over recent norm-based approaches. We  illustrate our results numerically, demonstrating non-vacuous bounds when coupled with data-dependent priors in specific settings, even in over-parametrized models.
\end{abstract}

\section{Introduction}
Statistical learning theory seeks to characterize the generalization ability of machine learning models, obtained from finite training data, to unseen test data. The field is by now relatively mature, and several tools exist to provide upper bounds on the generalization error, $R(h)$.
Often the upper bounds depend on the empirical risk, $\hat{R}(h)$, and different characterizations of complexity of the hypothesis class as well as potentially specific data-dependent properties. 
The renewed interest in deep artificial neural network models has demonstrated important limitations of existing tools. 
For example, VC dimension often simply relates to the number of model parameters and is hence insufficient to explain generalization of overparameterized models \citep{Bartlett2019NearlytightVA}. 
Traditional measures based on Rademacher complexity are also often vacuous, as these networks can indeed be trained to fit random noise \citep{zhang2017understanding}. 
Margin bounds have been adapted to deep non-linear networks \citep{Bartlett2017SpectrallynormalizedMB, Golowich2018SizeIndependentSC, Neyshabur2015NormBasedCC, neyshabur2018a}, albeit still unable to provide practically informative results.

An increasing number of studies advocate for non-uniform data-dependent measures to explain generalization in deep learning \citep{nagarajan2019uniform, Prez2020GeneralizationBF, Wei2019DatadependentSC}. 
Of particular interest are those that employ the sensitivity of a data-dependent predictor to parameter perturbations -- sometimes also referred to as \emph{flatness}
\citep{ShaweTaylor1997APA, Neyshabur2017ExploringGI, Dziugaite2017ComputingNG, Arora2018StrongerGB, Li2018OnTG,   nagarajan2018deterministic, Wei2019DatadependentSC, sulam2020adversarial, Banerjee2020DerandomizedPM}.
This observation has received some empirical validation as well \citep{zhang2017understanding, keskar2017on,
izmailov2018averaging, neyshabur2018the, Jiang*2020Fantastic, foret2021sharpnessaware}. 
Among the theoretical results of this line of work, \cite{Arora2018StrongerGB} study the generalization properties of a \emph{compressed} network, and \cite{Dziugaite2017ComputingNG, Neyshabur2017ExploringGI} study a stochastic perturbed version of the original network. The work in \citep{Wei2019DatadependentSC} provides improved bounds on the generalization error of neural networks as measured by a low Jacobian norm with respect to training data, while \cite{Wei2020ImprovedSC} capture the sensitivity of a neural network to perturbations in intermediate layers.
PAC-Bayesian analysis provides an alternate way of studying generalization by incorporating prior knowledge on a distribution of well-performing predictors in a Bayesian setting \citep{McAllester1998SomePT, guedj2019primer, Alquier2021UserfriendlyIT}. 
Recent results \citep{Dziugaite2017ComputingNG, Dziugaite2018EntropySGDOT, Zhou2019NonvacuousGB} have further strengthened the standard PAC-Bayesian analysis by optimizing over the posterior distribution to generate non-vacuous bounds on the expected generalization error of stochastic neural networks. Derandomized versions of PAC-Bayes bounds have also been recently developed \citep{nagarajan2018deterministic, Banerjee2020DerandomizedPM} relying on the sensitivity or \emph{noise resilience} of an obtained predictor. 
All of these works are insightful, alas important gaps remain in understanding generalization in non-linear, over-parameterized networks \citep{Prez2020GeneralizationBF}. 


\textbf{Our contributions.} In this work we employ tools of sensitivity analysis and PAC-Bayes bounds to provide generalization guarantees on deep ReLU feed-forward networks. Our key contribution is to make explicit use of the sparsity achieved by these networks across their different layers, reflecting the fact that only sub-networks, of reduced sizes and complexities, are active at every sample. Similar in spirit to the observations in \cite{https://doi.org/10.48550/arxiv.2202.13216}, we provide conditions under which the set of active neurons (smaller than the number of total neurons) is stable over suitable distributions of networks, with high-probability. In turn, these results allow us to instantiate recent de-randomized PAC-Bayes bounds \citep{nagarajan2018deterministic} and obtain new guarantees that do not depend on the global Lipschitz constant, nor are they exponential in depth. Importantly, our results provide data-dependent non-uniform guarantees that are able to leverage the structure (sparsity) obtained on a specific predictor. 
As we show experimentally, this degree of sparsity -- the reduced number of active neurons -- need not scale linearly with the width of the model or the number of parameters, thus obtaining bounds that are significantly tighter than known results. 
We also illustrate our generalization results on MNIST for models of different width and depth, providing non-vacuous bounds in certain settings.

\textbf{Manuscript organization.} After introducing basic notation, definitions and problem settings, we provide a detailed characterization of stable inactive sets in single-layer feed-forward maps in \Cref{sec:WarmUp}. \Cref{sec:GeneralizationTheory} presents our main results by generalizing our analysis to multiple layers, introducing appropriate distributions over the hypothesis class and tools from de-randomized PAC-Bayes theory. 
We demonstrate our bounds numerically in \Cref{sec:Experiments}, and conclude in \Cref{sec:Conclusion}.

\subsection{Notation and Definitions}\label{subsec: notation} 
Sets and spaces are denoted by capital (and often calligraphic) letters, with the exception of the set $[K] = \{1,\dots, K\}$. 
For a Banach space $\cW$ embedded with norm $\norm{\cdot}_\cW$, 
we denote by $\cB^{\cW}_{r}(\W)$, a bounded ball centered around $\W$ with radius $r$.
Throughout this work, scalar quantities are denoted by lower or upper case (not bold) letters, and vectors with bold lower case letters.
Matrices are denoted by bold upper case letters: $\mt{W}$ is a matrix with \textit{rows} $\vc{w}[i]$. 
We denote by $\mathcal{P}_I$, the index selection operator that restricts input to the coordinates specified in the set $I$. For a vector $\x\in\mathbb{R}^d$ and $I \subset [d]$, $\mathcal{P}_I:\mathbb R^d \to \mathbb R^{|I|}$ is defined as $\mathcal{P}_I(\x) := \x[I]$. For a matrix $\W\in\mathbb{R}^{p\times d}$ and $I\subset [p]$, $\mathcal{P}_I(\W) \in\mathbb{R}^{|I|\times d}$ restricts $\W$ to the \emph{rows} specified by $I$. 
For row and column index sets $I \subset [p]$ and $J \subset [d]$, $\mathcal{P}_{I,J}(\W) \in \mathbb{R}^{|I| \times |J|}$ restricts $\W$ to the corresponding sub-matrix. Throughout this work, we refer to \textit{sparsity} as the \textit{number of zeros} of a vector, so that for $\x\in\mathbb R^d$ with degree of sparsity $s$, $\|\x\|_0=d-s$. 
We denote the induced operator norm by $\norm{\cdot}_2$, 
and the Frobenius norm by $\norm{\cdot}_F$.  In addition, we will often use operator norms of reduced matrices induced by sparsity patterns. To this end, the following definition will be used extensively.

\begin{definition}(Sparse Induced Norms) 
Let $\mt{W} \in \mathbb{R}^{d_2 \times d_1}$ and $(s_2, s_1)$ be sparsity levels such that $0\leq s_1 \leq d_1-1$ and $0\leq s_2\leq d_2-1$. 
We define the $(s_2,s_1)$ sparse induced norm $\|\cdot\|_{(s_2,s_1)}$ as 
\[
\norm{\mt{W}}_{(s_2,s_1)} := 
\max_{\substack{|J_2| = d_2-s_2}}~\;
\max_{\substack{ |J_1| = d_1-s_1}}~\;
\norm{\mathcal{P}_{J_2, J_1}(\mt{W})}_2.
\]
\end{definition}
The sparse induced norm $\norm{\cdot}_{(s_2,s_1)}$ measures the induced operator norm of a worst-case sub-matrix. 
For any two sparsity vectors $(s_2, s_1) \preceq (\hat{s}_2, \hat{s}_1)$, one can show that $\norm{\W}_{(\hat{s}_2, \hat{s}_1)} \leq \norm{\W}_{(s_2, s_1)}$ for any matrix $\W$ (see \Cref{app: lemma: monotone-sparse-norm}). 
In particular, 
\[
\max_{i,j} |\mt{W}[i,j]| =
\norm{\W}_{(d_2-1, d_1-1)} \leq \norm{\W}_{(s_2, s_1)} \leq \norm{\W}_{(0,0)} = \norm{\W}_2.
\]
Thus, the sparse norm interpolates between the maximum absolute entry norm and the operator norm. 
Frequently in our exposition we rely on the case when $s_2 = d_2-1$, thus obtaining 
$\norm{\W}_{(d_2-1, s_1)} = \max_{i\in [d_2]} \max_{|J_1|=d_1-s_1} \norm{\mathcal{P}_{J_1}(\Wvec{}{i})}_2$,
the maximum norm of any reduced row of matrix $\W$.

Outside of the special cases listed above,  
computing the sparse norm for a general $(s_2,s_1)$ has combinatorial complexity. Instead, a modified version of the babel function (see \cite{tropp2003improved}) provides computationally efficient upper bounds\footnote{The particular definition used in this paper is weaker but more computationally efficient than that introduced in \cite{https://doi.org/10.48550/arxiv.2202.13216}.}.
\begin{definition} (Reduced Babel Function \citep{https://doi.org/10.48550/arxiv.2202.13216}) \label[definition]{def: red-babel}
Let $\W \in \mathbb{R}^{d_2 \times d_1}$, the reduced babel function at row sparsity level $s_2 \in \{0, \ldots, d_2-1\}$ and column sparsity level $s_1 \in \{0, \ldots, d_1-1\}$ is defined as\footnote{
When $s_2 = d_2-1, |J_2| = 1$, we simply define $\mu_{(s_2,s_1)}(\W) := 0$.}, 
\[
\mu_{s_2,s_1}(\W) := 
\frac{1}{\norm{\W}^2_{(d_2-1, s_1)}}\;
\underset{\substack{J_2 \subset [d_2],\\ |J_2|=d_2-s_2}}{\max} \;
\max_{j \in J_2} 
\Bigg[
\sum_{\substack{i \in J_2,\\ i \neq j}}\;
\underset{\substack{J_1 \subseteq [d_1]\\ |J_1| = d_1 - s_1}}{\max}
\left\lvert\mathcal{P}_{J_1}(\vc{w}[i]) \mathcal{P}_{J_1}(\vc{w}[j])^T\right\rvert 
\Bigg].
\]
\end{definition}
For the special case when $s_2$ = 0, the reduced babel function is equivalent to the babel function from \cite{tropp2003improved} on the transposed matrix $\W^T$.
We show in \Cref{lemma: bound-submatrix-norm} that the sparse-norm can be bounded using  the reduced babel function and the maximum reduced row norm $\norm{\cdot}_{(d_2-1, s_1)}$, 
\begin{equation}
\label{eq:sparsenormbound}
\norm{\W}_{s_2,s_1} \leq \norm{\W}_{d_2-1,s_1} \sqrt{1+\mu_{s_2, s_1}(\W)}.\end{equation}
See \Cref{app: sparse-norm} for a computationally efficient implementation of the reduced babel function. 

\subsection{Learning Theoretic Framework}
\label{subsec: learning-theoretic-framework}
We consider the task of multi-class classification 
with a bounded input space $\cX = \{\x \in \mathbb{R}^{d_0} ~|~ \norm{\x}_2 \leq \mathsf{M}_\cX\}$ 
and labels $\cY = \{1,\ldots, C\}$ 
from an unknown distribution $\cD_\cZ$ over $\cZ := (\cX \times \cY)$.
We search for a hypothesis in $\mathcal{H} \subset \{h : \cX \rightarrow \cYd \}$ 
that is an accurate predictor of label $y$ given input $\x$. Note that $\cY$ and $\cYd$ need not be the same. In this work, we consider $\cYd = \mathbb{R}^C$, and consider the predicted label of the hypothesis $h$ as $\hat{y}(\x) :=\argmax_{j} [h(\x)]_j$\footnote{The $\argmax$ here is assumed to break ties deterministically.}. 
The quality of prediction of $h$ at $\z = (\x,y)$ is informed by the margin defined as
$ \rho(h, \z) := \left([h(\x)]_y - \argmax_{j \neq y} [h(\x)]_j\right)$. 
If the margin is positive, 
then the predicted label is correct. For a threshold hyper-parameter $\gamma \geq 0$, we define a $\gamma$-threshold 0/1 loss $\ell_{\gamma}$  based on the margin as $\ell_{\gamma} (h, \z) := \mathbb 1\left\{ \rho(h, \z) < \gamma \right\}$.
Note that $\ell_\gamma$ is a stricter version of the traditional zero-one loss $\ell_0$, since $\ell_0(h, \z) \leq \ell_{\gamma}(h,\z)$ for all $\gamma \geq 0$. 
With these elements, the \textit{population risk} (also referred to as \textit{generalization error}) of a hypothesis $R_{\gamma}$ is the expected loss it incurs on a randomly sampled data point, 
$R_{\gamma}(h) := \expect_{\vc{z} \sim \cD_\cZ} \Big[ \ell_{\gamma} \big(h, \vc{z}\big )\Big].$ 
The goal of supervised learning is to obtain a hypothesis with low population risk $R_0(h)$, the probability of misclassification.  
While the true distribution $\cD_\cZ$ is unknown, we assume access to an i.i.d training set $\samp_T =  \{\vc{z}^{(i)},\ldots,\vc{z}^{(m)} \} \sim (\cD_\cZ)^m$  
and we seek to minimize the \textit{empirical risk} $\hat{R}_{\gamma}$, 
the average loss incurred on the training sample $\samp_T$, i.e. 
$\hat{R}_{\gamma} (h) :=
\frac{1}{m}\sum_{i=1}^m \ell_{\gamma} \left(h, \z^{(i)}
\right)$.
We shall later see that for any predictor, $R_0(h)$ can be upper bounded using the stricter empirical risk $\hat{R}_\gamma(h)$ for an appropriately chosen $\gamma >0$. 

In this work, we study the hypothesis class $\cH$ containing feed-forward neural networks with $K$ hidden layers. 
Each hypothesis $h\in \cH$ is identified with its weights $\{\W_k\}_{k=1}^{K+1}$,  and is a sequence of $K$ linear maps $\W_k \in \mathbb{R}^{d_k \times d_{k-1}}$
composed with a nonlinear activation function $\sigma(\cdot)$ and a final linear map $\W_{K+1} \in \mathbb{R}^{C \times d_K}$, 
\[
h(\x_0) 
:= \W_{K+1}\act{\W_k \act{\W_{K-1} \cdots \act{\W_1\x_0} \cdots } }.
\]
We exclude bias from our definitions of feed-forward layers for simplicity\footnote{This is a standard choice in related works, e.g. \cite{Bartlett2017SpectrallynormalizedMB}. Our analysis can be expanded to account for bias.}.  
We denote by $\x_{k}$ the $k^{th}$ hidden layer representation of network $h$ at input $\x_{0}$, so that $\x_{k} := \act{\W_k \x_{k-1}} ~ \forall 1\leq k\leq K,$ and $h(\x) := \W_{K+1}\x_K$.
Throughout this work, the activation function is assumed to be the Rectifying Linear Unit, or ReLU, defined by  $\sigma(x) = \max\{x,0\}$, acting entrywise on an input vector.

\section{Warm up: sparsity in feed-forward maps}\label{sec:WarmUp}
As a precursor to our sensitivity analysis for multi-layer feed-forward networks, we first consider a generic feed-forward map $\Phi(\x):= \sigma(\W \x)$. A na\"ive bound on the norm of the function output is $\norm{\Phi(\x)}_2 \leq \norm{\W}_2 \norm{\x}_2$, but this ignores the sparsity of the output of the feed-forward map (due to the $\relu$). Suppose there exists a set $I$ of inactive indices such that $\mathcal{P}_I(\Phi(\x)) = \mathbf{0}$, i.e. for all $i\in I$, $\vc{w}[i]\cdot \x \leq 0$. In the presence of such an index set, clearly $\norm{\Phi(\x)}_2 \leq \norm{\mathcal{P}_{I^c}(\W)}_2 \norm{\x}_2$\footnote{$I^c$ is the complement of the index set $I$, also referred to as $J$ when clear from context.}.
Thus, estimates of the effective size of the feed-forward output, and other notions such as sensitivity to parameter perturbations, can be refined by accounting for the sparsity of activation patterns. Note that the inactive index set $I$ varies with each input, $\x$, and with the parameters of predictor, $\W$.

For some $\zeta_0, \xi_1, \eta_1 > 0$ and sparsity levels $s_1, s_0$, let $\cX_0 = \{\vc{x} \in \mathbb{R}^{d_0} ~|~ \norm{\vc{x}}_2 \leq \zeta_0, \; \norm{\x}_0 \leq d_0 - s_0
\}$ denote a bounded sparse input domain and
let $\cW_1 := \{\mt{W} \in \mathbb{R}^{d_1 \times d_0} ~|~ \norm{\mt{W}}_{(d_1-1,s_0)} \leq \xi_1,\; \mu_{s_1,s_0}(\mt{W}) \leq \eta_1\}$ denote a parameter space. 
We now define a radius function that measures the amount of relative perturbation 
within which a certain inactive index set is stable. 
\begin{definition} (Sparse local radius\label{def: slr}\footnote{The definition here is inspired by \cite{https://doi.org/10.48550/arxiv.2202.13216} but stronger.})
For any weight $\W \in \mathbb{R}^{d_1\times d_0}$, input $\x \in \mathbb{R}^{d_0}$ and sparsity level $1\leq s_1 \leq d_1$, we define a sparse local radius and a sparse local index set as
\begin{equation}\label{eq: normalized-q}
\rsl(\W, \x, s_1) := \act{
\textsc{sort}\left(-\frac{  \vc{W} \cdot \x}{\xi_1  \zeta_0},\; s_1\right) }, \quad 
I(\W, \x, s_1) := \textsc{Top-k}\left(-\frac{  \vc{W} \cdot \x}{\xi_1 \zeta_0}, s_1\right).
\end{equation}
Here, 
$\textsc{Top-k}(\vc{u},j)$ is the index set of the top $j$ entries in $\vc{u}$, and $\textsc{sort}(\vc{u}, j)$ is its $j^{th}$ largest entry. 
\end{definition} 
We note that when evaluated on a weight $\W \in \cW_1$ and input $\x \in \cX_0$, 
for all sparsity levels the sparse local radius $\rsl(\W, \x, s_1) \in [0,1]$. We denote the sparse local index set as $I$ when clear from the context. We now analyze the stability of the sparse local index set and the resulting 
reduced sensitivity of model output. For brevity, we must defer all proofs to the appendix.

\begin{lemma}\label[lemma]{lemma: slr}
Let $\epsilon_0 \in [0,1]$ be a relative input corruption level and let $\epsilon_1 \in [0,1]$ be the relative weight corruption. 
For the feed-forward map $\Phi$ with weight $\W \in \cW_1$ and input $\x \in \cX_0$, the following statements hold for any output sparsity level $1\leq s_1\leq d_1$, 
\begin{enumerate}
    \item \textbf{Existence of an inactive index set and bounded outputs}: 
    If $\rsl(\W, \x, s_1) > 0$, then the index set $I(\W, \x, s_1)$ is inactive for $\Phi(\x)$. Moreover, $\norm{\Phi(\x)}_2 \leq \xi_1\sqrt{1+\eta_1}\cdot \zeta_0.$
    \item \textbf{Stability of an inactive index set to input and parameter perturbations}:
    Suppose $\xhat$ and $\What$ are perturbed inputs and weights respectively such that, $\norm{\xhat -\x}_0 \leq d_0 - s_0$ and, 
    \begin{align*}
    \text{ }\;
    \frac{\norm{\hat{\x}-\x}_2}{\zeta_0} \leq \epsilon_0 \;\text{ and }\;
    \max\left\{\frac{\norm{\What-\W}_{(d_1-1,s_0)}}{\xi_1} , \frac{\norm{\What-\W}_{(s_1,s_0)}}{\xi_1\sqrt{1+\eta_1}}\right\}\leq \epsilon_1,
    \end{align*}
    and denote $\hat{\Phi}(\x) = \sigma(\hat{\W}\x)$.   If $\rsl(\W, \x, s_1) \geq 
    -1 + (1+\epsilon_0)(1+\epsilon_1)$, then the index set $I(\W, \x, s_1)$ is inactive and stable to perturbations, i.e.\footnote{For notational ease we suppress arguments and let $I = I(\W,\x,s_1)$.} 
    $\mathcal{P}_I(\Phi(\x)) = \mathcal{P}_I(\Phi(\xhat)) = \mathcal{P}_I(\hat{\Phi}(\xhat)) = \mathbf{0}$. Moreover, $\norm{\hat{\Phi}(\xhat) - \Phi(\x)}_2 \leq \left(-1 + (1+\epsilon_0)(1+\epsilon_1)\right)\cdot \xi_1\sqrt{1+\eta_1}\cdot \zeta_0$.
    \item \textbf{Stability of sparse local radius}: 
    For a perturbed input $\xhat$ such that $\norm{\xhat-\x}_0 \leq d_0 - s_0, $ and perturbed weight $\What$, the difference between sparse local radius is bounded  
    \begin{align*}
    \hspace{-30pt}\left\lvert \rsl(\What, \hat{\x},  s_1) - \rsl(\W, \x, s_1) \right \rvert \leq -1 + \left(1+\frac{\norm{\xhat - \x}_2}{\zeta_0}\right)\left(1+\frac{\norm{\What-\W}_{(d_1-1,s_0)}}{\xi_1}\right).
    \end{align*}
\end{enumerate}
\end{lemma}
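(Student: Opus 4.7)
The plan is to prove the three claims sequentially, exploiting two common ingredients throughout: the ReLU is $1$-Lipschitz, and because $\x$ and $\hat\x - \x$ both have support of size at most $d_0 - s_0$, every row-wise inner product with $\W$ or $\What - \W$ can be controlled by the reduced row norm $\|\cdot\|_{(d_1-1,s_0)}$, while joint restrictions to active rows $J = I^c$ (of size $d_1 - s_1$) can be controlled by the stronger $\|\cdot\|_{(s_1,s_0)}$.

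For part 1, I would unpack the definitions: $\rsl(\W,\x,s_1) > 0$ means that the $s_1$-th largest entry of $-\W\x/(\xi_1\zeta_0)$ is strictly positive, so $\vc{w}[i]\cdot \x \leq -\xi_1\zeta_0\,\rsl(\W,\x,s_1) \leq 0$ for every $i\in I$, making these coordinates inactive after $\sigma$. Since $\Phi(\x)$ vanishes on $I$, I would restrict to $J$ and to $\operatorname{supp}(\x)$ to obtain $\|\Phi(\x)\|_2 \leq \|\mathcal{P}_{J,\operatorname{supp}(\x)}(\W)\,\x\|_2 \leq \|\W\|_{(s_1,s_0)}\,\zeta_0$, which is at most $\xi_1\sqrt{1+\eta_1}\,\zeta_0$ by \eqref{eq:sparsenormbound} together with $\W \in \cW_1$.

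For part 3, I would show the composition $\sigma \circ \textsc{sort}(\cdot, s_1)$ is $1$-Lipschitz in $\ell_\infty$: ReLU is $1$-Lipschitz, and shifting each coordinate by at most $\delta$ shifts the $s_1$-th order statistic by at most $\delta$. Hence $|\rsl(\What,\hat\x,s_1) - \rsl(\W,\x,s_1)| \leq \|\W\x - \What\hat\x\|_\infty/(\xi_1\zeta_0)$. Expanding $\What\hat\x - \W\x = \W(\hat\x-\x) + (\What-\W)\x + (\What-\W)(\hat\x-\x)$ and bounding each row's contribution with $\|\cdot\|_{(d_1-1,s_0)}$ using the sparse supports of $\x$ and $\hat\x-\x$, the three terms regroup into exactly $(1 + \|\hat\x-\x\|_2/\zeta_0)(1 + \|\What-\W\|_{(d_1-1,s_0)}/\xi_1) - 1$.

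Part 2 combines these ideas. To show $I$ stays inactive for $\hat\Phi(\hat\x)$, I write, for each $i\in I$, $\hat{\vc{w}}[i]\cdot\hat\x = \vc{w}[i]\cdot\x + \vc{w}[i]\cdot(\hat\x-\x) + (\hat{\vc{w}}[i]-\vc{w}[i])\cdot\x + (\hat{\vc{w}}[i]-\vc{w}[i])\cdot(\hat\x-\x)$, bound the three perturbation terms by the $\|\cdot\|_{(d_1-1,s_0)}$ hypothesis, and obtain $\hat{\vc{w}}[i]\cdot\hat\x \leq \xi_1\zeta_0\bigl[(1+\epsilon_0)(1+\epsilon_1)-1 - \rsl(\W,\x,s_1)\bigr] \leq 0$ by the radius assumption; the two extra identities $\mathcal{P}_I(\Phi(\x)) = \mathcal{P}_I(\Phi(\hat\x)) = \mathbf{0}$ follow from the same calculation with $\epsilon_0=\epsilon_1=0$ or $\epsilon_1=0$ respectively. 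For the output bound, both $\Phi(\x)$ and $\hat\Phi(\hat\x)$ vanish on $I$, so by $1$-Lipschitzness of $\sigma$ and restriction to $J$, the same expansion — but now bounded with the stronger $\|\cdot\|_{(s_1,s_0)}$ norm (this is where the second hypothesis on $\|\What-\W\|$ becomes essential) — yields $\xi_1\sqrt{1+\eta_1}\,\zeta_0\bigl((1+\epsilon_0)(1+\epsilon_1)-1\bigr)$. The main subtlety throughout will be keeping the two sparse norms straight: the row-wise $\|\cdot\|_{(d_1-1,s_0)}$ is needed to certify that individual pre-activations are non-positive, while the matrix sparse norm $\|\cdot\|_{(s_1,s_0)}$ is needed to control the Euclidean norm of the active block.
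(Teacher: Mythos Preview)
Your proposal is correct and follows essentially the same approach as the paper: the same three-term expansion $\What\hat\x - \W\x = \W(\hat\x-\x) + (\What-\W)\x + (\What-\W)(\hat\x-\x)$, the same use of the row-wise norm $\|\cdot\|_{(d_1-1,s_0)}$ to control individual pre-activations and the block norm $\|\cdot\|_{(s_1,s_0)}$ to control the active-row Euclidean norm, and the same specialization to $\epsilon_1=0$ to handle $\Phi(\hat\x)$. The only cosmetic difference is in Part~3, where you invoke the $\ell_\infty$-Lipschitzness of the $s_1$-th order statistic directly, while the paper writes out the equivalent $\max$--$\min$ characterization $\textsc{sort}(\vc{u},s_1)=\max_{|I|=s_1}\min_{i\in I}\vc{u}[i]$ and argues each direction separately; your phrasing is slightly more compact but the content is identical.
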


A key takeaway of this Lemma (see Appendix \ref{app: lemma: slr} for its proof) is that one can obtain tighter bounds, on both the size of the network output as well as its sensitivity to corruptions, if the corresponding sparse local radius is sufficiently large. 
The results above quantify these notions for a given sample. In the next section, we will leverage this characterization within the framework of PAC-Bayes analysis to provide a generalization bound for feed-forward networks.%

\section{A sparsity-aware generalization theory}\label{sec:GeneralizationTheory}
We shall construct non-uniform data-dependent generalization bounds for feed-forward networks based on a local sensitivity analysis of deep ReLU networks, employing the intuition from the previous section. To do so, we will first study the size of the layer outputs using \Cref{def: red-babel}, then  measure the sensitivity in layer outputs to parameter perturbations using \Cref{lemma: slr} across multiple layers, and finally leverage a derandomized PAC-Bayes result from \cite{nagarajan2018deterministic} (see \Cref{app: subsec: derandom}). 
Before embarking on the analysis, we note the following convenient property of the margin for any two predictors $h, \hat{h}$ from \cite[Lemma A.3]{Bartlett2017SpectrallynormalizedMB}, 
\[
\left\lvert \left(h(\x)_y - \max_{j\neq y} h(\x)_j \right)
- \left(\hat{h}(\x)_y - \max_{j\neq y} \hat{h}(\x)_j\right)
\right\rvert \leq 2\norm{\hat{h}(\x) - h(\x)}_\infty.
\]
Hence, quantifying the sensitivity of the predictor outputs will inform the sensitivity of the loss. Similar to other works \citep{nagarajan2018deterministic, Banerjee2020DerandomizedPM}, our generalization bound will be derived by studying the sensitivity of neural networks upon perturbations to the layer weights.

For the entirety of this section, we fix a set of \textit{base hyper-parameters} that determine a specific class of neural networks, 
the variance of a posterior distribution over networks, and the resolution (via a sparsity vector) at which the generalization is measured -- see \Cref{table:basehyp} for reference. 
We denote by $\vc{s} = \{s_1, \ldots, s_K\}$ a vector of layer-wise sparsity levels, which reflects the inductive bias of the learner on the potential degree of sparsity of a trained network on the training data. Next we define two 
hyper-parameters, $\bm{\xi} := \{\xi_1, \ldots, \xi_{K+1}\}$ where $\xi_k > 0$ bounds the sparse norm $\norm{\cdot}_{(d_k-1, s_{k-1})}$ of the layer weights and $\bm{\eta} := \{\eta_1, \ldots, \eta_{K}\}$ where $\eta_k > 0$ bounds the reduced babel function $\mu_{s_k, s_{k-1}}(\cdot)$ of the layer weights. 
Finally, we let $\bm{\epsilon}:= \{\epsilon_1, \ldots, \epsilon_{K+1}\}$ with $\epsilon_k > 0$ bound the amount of relative perturbation in the weights. 
This section treats the quartet $(\vc{s}, \bm{\xi}, \bm{\eta}, \bm{\epsilon})$ as constants\footnote{Unless otherwise specified we let $s_0 = s_{K+1} = 0$ and $\epsilon_0 = 0$.}, while in the next section we shall discuss appropriate values for these hyper-parameters.

\begin{table} 
\centering
\begin{tabular}{|c|c|}
\hline
$\vc{s} = \{s_1, \ldots, s_k\},\;\; 0 \leq s_k \leq d_k -1$ & Layer wise sparsity vector  \\ 
\hline
$\bm{\xi} = \{\xi_1, \ldots, \xi_{K+1}\},\;\; 0 \leq \xi_k $ & Layer wise bound on $\norm{\cdot}_{(d_k-1, s_{k-1})}$ \\
\hline 
$\bm{\eta} = \{\eta_1, \ldots, \eta_{K}\},\;\; 0 \leq \eta_k $ & Layer wise bound on $\mu_{s_k, s_{k-1}}(\cdot)$ \\
\hline
$\bm{\epsilon} = \{\epsilon_1, \ldots, \epsilon_{K+1}\},\;\; 0 \leq \epsilon_k $ & Layer wise bound on relative perturbation\\
 \hline
\end{tabular}
\caption{Independent base hyper-parameters}
\label{table:basehyp}
\end{table}
 \begin{definition}(Norm bounded feed-forward networks)\label{def: bounded-ffn}
We define below the 
parameter domain $\mathcal{W}_{k}$ and a class of feed-forward networks $\cH_{K+1}$ with $K$-hidden layers, 
\begin{align}\label{eq: domain}
\nonumber    
&\mathcal{W}_{k} := \left\{ \mt{W} \in \mathbb{R}^{d_{k} \times d_{k-1}} ~|~ \norm{\mt{W}}_{(d_k -1, s_{k-1})} \leq \xi_k,\quad 
\mu_{s_{k}, s_{k-1}}(\mt{W}) \leq \eta_k,\right\},\; \forall\; k\in [K], \; \\
\nonumber 
&\cH := \left \{ 
h(\cdot) := \W_{K+1} \act{\W_K \cdots \act{\W_1 \cdot}} ~\vert~ \norm{\W_{K+1}}_{(C-1, s_{K})} \leq \xi_{K+1}, \;\mt{W}_k \in \mathcal{W}_{k}, \; \forall\; k\in [K] 
\right\}.
\end{align}
\end{definition}
To 
measure the local sensitivity of the network outputs, it will be useful to formalize a notion of local neighborhood for networks. 
\begin{definition}(Local Neighbourhood)\label{def: nearby-networks} 
Given $h \in \cH$, 
define $\cB(h, \bm{\epsilon})$ to be the local neighbourhood around $h$ containing perturbed networks $\hat{h}$ with weights $\{\What_j\}_{k=1}^{K+1}$ such that at each layer $k$\footnote{For the last layer we only require $\norm{\What_{K+1}-\W_{K+1}}_{C-1,s_K} \leq \epsilon_{K+1} \cdot \xi_{K+1}.$}, 
\begin{align*}
\max\left\{ \frac{\norm{\What_k-\W_k}_{(s_k, s_{k-1})}}{\xi_k\sqrt{1+\eta_k}}, \frac{\norm{\What_k-\W_k}_{(d_k-1, s_{k-1})}}{\xi_k}  \right\} \leq \epsilon_k.
\end{align*}
\end{definition}
It will be useful to understand the probability that $\hat{h} \in \mathcal{B}(h, \bm{\epsilon})$ when the perturbations to each layer weight are random, in particular from Gaussian distributions over feed-forward networks:
\begin{definition}(Entrywise Gaussian)
Let $h \in \cH$ be any network with $K+1$ layers, 
and let $\bm{\sigma}^2 := \{\sigma_1^2, \ldots, \sigma^2_{K+1}\}$ be a layer-wise variance. 
We denote by $\mathcal{N}(h, \bm{\sigma^2})$ a distribution with mean network $h$ such that for any $\hat{h} \sim \mathcal{N}(h, \bm{\sigma}^2)$ with layer weights $\What_{k}$, each entry $\What_k[i,j] \sim \mathcal{N}(\W_k[i,j], \sigma_k^2)$. 
\end{definition}

\subsection{Sensitivity of network output}
Given a predictor $h\in\cH$,
note that the size of a network output for any given input is bounded by $\norm{h(\x_0)}_2 \leq \prod_{k=1}^{K+1} \norm{\W_k}_2 \mathsf{M}_\cX$, which ignores the sparsity of the intermediate layers. 
We will now generalize the result in \Cref{lemma: slr} by making use of the inactive index sets at every layer $I_{k}$, such that $\mathcal{P}_{I_{k}}(\x_k) = \mathbf{0}$, obtaining a tighter (input dependent) characterization of sensitivity to perturbations of the network.
For notational convenience, we define two additional dependent notations: we let $\zeta_0 := \mathsf{M}_\cX$ and $\zeta_k:= \xi_k \sqrt{1+\eta_k} \cdot \zeta_{k-1} = \mathsf{M}_{\cX} \prod_{n=1}^k \xi_n \sqrt{1+\eta_n}$ denote a bound on the layer-wise size of the outputs. At the final layer, we let $\zeta_{K+1}:= \xi_{K+1} \zeta_K$ as a bound on the network output. Additionally, we define $\gamma_k:= -1 + \prod_{n=1}^{k} (1+\epsilon_n)$ as a threshold on the sparse local radius evaluated at each layer -- see \Cref{table:dependent} for a summary. In the last layer, we let this value $\gamma_{K+1}$ represent the desired margin.
For networks $\hat{h}$ with perturbed weights $\What$, we denote by $\xhat_k := \act{\What_k \xhat_{k-1}}$ the perturbed layer representation corresponding to input $\x_0$.
\begin{table}  
\centering
\begin{tabular}{|c|c|}
\hline
$  
\zeta_k := \xi_k \sqrt{1+\eta_k} \cdot \zeta_{k-1}$ 
, \;\; $\forall\; k \in [K]$& 
Bound on norm of layer outputs\\
 \hline
$\zeta_{K+1} := \xi_{K+1}\zeta_K$\;\; & Bound on norm of network output\\
\hline 
$\gamma_k := -1 + \prod_{n=1}^{k} (1 + \epsilon_n)$,\;\; $\forall\; k \in [K+1]$ & Layer wise threshold for local radius\\
 \hline
 $r_k(h,\z) := 
 \act{
\mathrm{sort}\left(-\left[ \frac{  \Wvec{k}{i} \cdot \x_{k-1} }{\xi_k \zeta_{k-1}}  \right]_{i=1}^{d_k},\; d_k -s_k\right) }$ & Layer-wise sparse local radius\\[2pt]
\hline
\end{tabular}
\caption{Layer-wise bounds and thresholds.} 
\label{table:dependent}
\end{table}

\begin{definition}(Layer-wise sparse local radius)\label{def: layer-slr}
Let $h$ be any feed-forward network with weighs $\W_k \in \mathbb{R}^{d_k\times d_{k-1}}$, and let $\x_0 \in \mathbb{R}^{d_0}$.
We define a layer-wise sparse local radius and a layer-wise inactive index set as below,
\begin{equation}\label{eq: layer-slr}
\nonumber I_k(h,\x_0) := 
\textsc{Top-k}\left(- \frac{  \vc{W}_k \cdot \x_{k-1}}{\xi_k \zeta_{k-1}} , s_k\right), \quad
r_k
(h, \x_0) 
\nonumber := 
\act{
\mathrm{sort}\left(-\frac{  \vc{W}_k \cdot \x_{k-1}}{\xi_k \zeta_{k-1}},\; s_k\right) 
}.
\end{equation}
\end{definition} 
Definition \ref{def: layer-slr} now allows us, by employing \Cref{lemma: slr}, to generalize our previous observations to entire network models, as we now show.
\begin{theorem}\label[proposition]{proposition: reduced-size-multi}
Let $h \in \cH$, if at each layer $k$ the layer-wise sparse local radius is nontrivial, i.e. $\forall~k\in [K],~~ r_k(h, \x_0) > 0$.
Then the index sets $I_k(h,\x_0)$ are inactive at layer $k$ and the size of the hidden layer representations and the network output are bounded as follows, 
\begin{align}\label{eq: local-layer-size-3}
\forall~k\in [K],~~ \norm{\x_k}_2 
&\leq 
\zeta_k,
~~\text{ and }~~
\norm{h(\x_0)}_{\infty} 
\leq 
\zeta_{K+1}.
\end{align}
\end{theorem}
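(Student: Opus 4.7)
The plan is to prove \Cref{proposition: reduced-size-multi} by induction on the layer index $k$, using \Cref{lemma: slr} (part 1) as a per-layer building block. The key observation is that each hidden layer $k$ can be viewed as an instance of the single-layer feed-forward map $\Phi$ from \Cref{sec:WarmUp}, with an ``input domain'' parametrized by the bound $\zeta_{k-1}$ and sparsity level $s_{k-1}$, and a ``weight domain'' exactly matching $\cW_k$. The nontrivial content of the induction is therefore to verify that the output $\x_{k-1}$ of layer $k-1$ really does sit in the correct input domain for layer $k$, namely that it satisfies both the norm bound $\norm{\x_{k-1}}_2 \leq \zeta_{k-1}$ and the sparsity bound $\norm{\x_{k-1}}_0 \leq d_{k-1}-s_{k-1}$.

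For the base case $k=1$, we have $\x_0 \in \cX$ so $\norm{\x_0}_2 \leq \mathsf M_\cX = \zeta_0$ (with $s_0=0$ the sparsity requirement is trivial), and $\W_1 \in \cW_1$ by \Cref{def: bounded-ffn}. Since $r_1(h,\x_0) > 0$ by hypothesis, the first conclusion of \Cref{lemma: slr} (existence of an inactive index set and bounded outputs) applies verbatim: $\mathcal{P}_{I_1}(\x_1) = \mathbf 0$ with $|I_1| = s_1$, and $\norm{\x_1}_2 \leq \xi_1\sqrt{1+\eta_1}\cdot\zeta_0 = \zeta_1$.

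For the inductive step, assume the conclusions hold up through layer $k-1$. Then $\norm{\x_{k-1}}_2 \leq \zeta_{k-1}$, and the inactive index set $I_{k-1}$ of size $s_{k-1}$ certifies that $\x_{k-1}$ has at least $s_{k-1}$ zero entries, so it lies in the input domain required by the layer-$k$ analogue of \Cref{lemma: slr}. The weight $\W_k \in \cW_k$ satisfies the $(d_k-1,s_{k-1})$-sparse norm bound $\xi_k$ and the reduced babel bound $\eta_k$. Since $r_k(h,\x_0) > 0$ by hypothesis, part 1 of \Cref{lemma: slr} yields both that $I_k$ is inactive at layer $k$ and that $\norm{\x_k}_2 \leq \xi_k\sqrt{1+\eta_k}\cdot\zeta_{k-1} = \zeta_k$, completing the induction and establishing \eqref{eq: local-layer-size-3} at hidden layers.

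For the final linear layer, I would argue directly rather than via \Cref{lemma: slr} (no ReLU, and we need an $\infty$-norm bound). Writing $h(\x_0) = \W_{K+1}\x_K$ with $\mathcal{P}_{I_K}(\x_K) = \mathbf 0$, each coordinate satisfies $|[h(\x_0)]_i| = |\mathcal{P}_{I_K^c}(\vc{w}_{K+1}[i]) \cdot \mathcal{P}_{I_K^c}(\x_K)| \leq \norm{\mathcal{P}_{I_K^c}(\vc{w}_{K+1}[i])}_2\, \norm{\x_K}_2$ by Cauchy--Schwarz applied to the support of $\x_K$. Since $|I_K^c| = d_K-s_K$, the row-wise sparse induced norm bound $\norm{\W_{K+1}}_{(C-1,s_K)} \leq \xi_{K+1}$ controls the first factor uniformly in $i$, while the inductive bound gives $\norm{\x_K}_2 \leq \zeta_K$. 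Taking the maximum over $i$ yields $\norm{h(\x_0)}_\infty \leq \xi_{K+1}\zeta_K = \zeta_{K+1}$.

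The proof presents no serious obstacle; the only place that requires care is making sure that the sparsity-with-norm-bound hypothesis of \Cref{lemma: slr} is carried forward correctly: the inactive set at layer $k-1$, whose size matches exactly the parameter $s_{k-1}$ appearing in the definitions of $\cW_k$, $\xi_k$ and $\eta_k$, is precisely what licenses applying the single-layer lemma at layer $k$ with the correct sparsity level in its input domain.
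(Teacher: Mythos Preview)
Your proposal is correct and follows essentially the same approach as the paper: an induction on the layer index that invokes part 1 of \Cref{lemma: slr} at each hidden layer (verifying that the previous layer's output lies in the appropriate sparse, norm-bounded input domain), followed by a direct $\ell_\infty$ argument at the final linear layer using the $(C-1,s_K)$-sparse row norm. The only cosmetic difference is that the paper re-derives the bound $\norm{\x_k}_2 \leq \zeta_k$ by explicitly passing through the sparse-norm inequality $\norm{\W_k}_{(s_k,s_{k-1})} \leq \xi_k\sqrt{1+\eta_k}$ (via \Cref{lemma: bound-submatrix-norm}), whereas you quote the conclusion of \Cref{lemma: slr} directly; both are equivalent.
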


\noindent In a similar vein, we can characterize the sensitivity of the network to parameter perturbations. 

\begin{theorem}\label[proposition]{proposition: reduced-sensitivity}
Let $h \in \cH$ and let $\hat{h} \in \mathcal{B}(h,\bm{\epsilon})$ be a nearby perturbed predictor with weights $\{\What_k\}$. 
If each layer-wise sparse local radius is sufficiently large, i.e. $\forall~k\in [K],~r_k(h, \x_0) \geq \gamma_{k}$,
then the index sets $I_k(h,\x_0)$ are inactive for the perturbed layer representations $\xhat_k$ and the distance between the layer representations and the network output are bounded as follows, 
\begin{align}\label{eq: local-layer-size-3}
\forall~k\in [K],~~ \norm{\xhat_k - \x_k}_2 
&\leq 
\zeta_k \cdot \gamma_k,
~~\text{ and }~~
\norm{\hat{h}(\x_0) - h(\x_0)}_{\infty} 
\leq 
\zeta_{K+1} \cdot \gamma_{K+1}.
\end{align}
\end{theorem}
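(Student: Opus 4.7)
My plan is to proceed by induction on the layer index $k$, using \Cref{lemma: slr} (part 2) as the engine for every ReLU layer $1,\ldots,K$ and handling the final linear layer $K+1$ by a direct decomposition. The entire bookkeeping is driven by the telescoping identity $1+\gamma_k = (1+\gamma_{k-1})(1+\epsilon_k)$, which is exactly the way \Cref{lemma: slr} composes an accumulated ``input corruption'' $\epsilon_0 \leftarrow \gamma_{k-1}$ with the fresh ``weight corruption'' $\epsilon_1 \leftarrow \epsilon_k$ into the new threshold $\gamma_k$.

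\textbf{Induction on layers $1,\ldots,K$.} For the base case, I apply \Cref{lemma: slr} with $\hat{\x}_0 = \x_0$ (so $\epsilon_0 = 0$), weights $\W_1, \What_1$, and the identifications $\zeta_0 = \mathsf{M}_\cX$, $\xi_1, \eta_1$. The required threshold $-1+(1+0)(1+\epsilon_1) = \epsilon_1 = \gamma_1$ is met by hypothesis, yielding inactivity of $I_1(h,\x_0)$ for $\xhat_1$ and $\norm{\xhat_1 - \x_1}_2 \leq \gamma_1 \xi_1\sqrt{1+\eta_1}\,\zeta_0 = \gamma_1 \zeta_1$. For the inductive step at layer $k$, the induction hypothesis gives $\norm{\xhat_{k-1}-\x_{k-1}}_2 \leq \zeta_{k-1}\gamma_{k-1}$ and inactivity of $I_{k-1}(h,\x_0)$ for \emph{both} $\x_{k-1}$ and $\xhat_{k-1}$; combined with \Cref{proposition: reduced-size-multi} (applicable since $r_n \geq \gamma_n \geq 0$), this supplies all domain hypotheses of \Cref{lemma: slr} at layer $k$, namely $\norm{\x_{k-1}}_2 \leq \zeta_{k-1}$, $\norm{\x_{k-1}}_0 \leq d_{k-1}-s_{k-1}$, and the crucial $\norm{\xhat_{k-1}-\x_{k-1}}_0 \leq d_{k-1}-s_{k-1}$. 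Invoking \Cref{lemma: slr} with $\zeta_0 \leftarrow \zeta_{k-1}$, $\xi_1 \leftarrow \xi_k$, $\eta_1 \leftarrow \eta_k$, $\epsilon_0 \leftarrow \gamma_{k-1}$, and $\epsilon_1 \leftarrow \epsilon_k$ (the latter furnished by $\hat h \in \cB(h,\bm{\epsilon})$), the required threshold collapses to $-1+(1+\gamma_{k-1})(1+\epsilon_k) = \gamma_k$, closing the induction and giving $\norm{\xhat_k-\x_k}_2 \leq \gamma_k \xi_k\sqrt{1+\eta_k}\,\zeta_{k-1} = \gamma_k \zeta_k$.

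\textbf{Final layer.} Since there is no ReLU, I write $\hat h(\x_0) - h(\x_0) = (\What_{K+1}-\W_{K+1})\xhat_K + \W_{K+1}(\xhat_K-\x_K)$ and bound each entry. Both $\x_K$ and $\xhat_K$ are supported on $J_K := I_K(h,\x_0)^c$ with $|J_K| = d_K-s_K$, so the $j$-th coordinate of either summand is an inner product of a row of the respective weight matrix, restricted to $J_K$, with a vector of norm at most $\zeta_K(1+\gamma_K)$ or $\zeta_K \gamma_K$ respectively (using the triangle inequality for the first). Bounding the restricted rows by $\norm{\What_{K+1}-\W_{K+1}}_{(C-1,s_K)} \leq \epsilon_{K+1}\xi_{K+1}$ and $\norm{\W_{K+1}}_{(C-1,s_K)} \leq \xi_{K+1}$, then collecting, produces $\xi_{K+1}\zeta_K\bigl[(1+\epsilon_{K+1})(1+\gamma_K)-1\bigr] = \gamma_{K+1}\zeta_{K+1}$ uniformly in $j$, which is the desired $\infty$-norm bound.

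\textbf{Main obstacle.} The only delicate point is the inductive step: verifying the sparsity hypothesis $\norm{\xhat_{k-1}-\x_{k-1}}_0 \leq d_{k-1}-s_{k-1}$ needed to feed $\xhat_{k-1}$ into \Cref{lemma: slr} as a valid ``perturbed input.'' This is exactly why it is essential that the lemma guarantees inactivity of $I_{k-1}$ not just on the unperturbed $\x_{k-1}$ but on the perturbed $\xhat_{k-1}$ as well; without that stronger stability, the induction would not close. Once this is in place the rest is essentially arithmetic, driven entirely by the multiplicative telescoping of $1+\gamma_k$.
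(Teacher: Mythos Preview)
Your proof is correct and follows essentially the same route as the paper: induction on layers via \Cref{lemma: slr} (with the telescoping $1+\gamma_k=(1+\gamma_{k-1})(1+\epsilon_k)$ and \Cref{proposition: reduced-size-multi} supplying the domain conditions), followed by a direct estimate at the final linear layer. The only cosmetic difference is that the paper decomposes the last layer as $\What_{K+1}(\xhat_K-\x_K)+(\What_{K+1}-\W_{K+1})\x_K$ whereas you use $(\What_{K+1}-\W_{K+1})\xhat_K+\W_{K+1}(\xhat_K-\x_K)$; both yield the same bound $\zeta_{K+1}\gamma_{K+1}$.
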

Proofs of the above propositions can be found in \ref{app: proposition: reduced-size-multi} and \ref{app: proposition: reduced-sensitivity} respectively. 
\subsection{Sparsity-aware generalization}
We are now ready to state our main theorem on generalization of feed-forward networks that leverages improved sensitivity of network outputs due to stable inactive index sets. 

\begin{theorem}\label{theorem: sparsity-aware-single}
Let $\mathcal P$ 
be any prior distribution over depth-$(K+1)$ feed-forward network 
chosen independently of the training sample. 
Let $h \in \cH$ be any feed-forward network (possibly trained on sample data), with $\cH$ determined by fixed base hyper-parameters $(\vc{s}, \bm{\epsilon}, \bm{\xi}, \bm{\eta})$, and denote the sparse loss by $\ell_{\mathrm{sparse}}(h,\x) = \mathbb 1\{ \exists  \; k,\;  r_{k}(h, \x) < 3 \gamma_k \}$.  With probability at least $(1-\delta)$ over the choice of i.i.d training sample $\samp_T$ of size $m$, 
the generalization error of $h$ is bounded as follows,
\begin{equation*}
R_{0}(h) \leq  \hat{R}_{4\zeta_{K+1}\gamma_{K+1}}(h)
 + \frac{2K}{m}\sum_{\x^{(i)} \in \samp_T} 
\ell_{\mathrm{sparse}}(h,\x^{(i)})
 + \tilde{\mathcal O} \left(\sqrt{\frac{\mathrm{KL}\left(  
  \mathcal{N}\left(h, \bm{\sigma}^2_{\mathrm{sparse}}\right)
 \;|| \;
 \mathcal{P}
 \right)
}{m}}\right)
\end{equation*}
where $\bm{\sigma}_{\mathrm{sparse}} = \{\sigma_{1}, \ldots, \sigma_{K}\}$ 
is defined by $\sigma_{k}:= \epsilon_k \cdot \frac{\xi_k }{4 \sqrt{ 2d_\mathrm{eff} + \log\left(2(K+1)\sqrt{m}\right)}}$, and where  
$d_{\mathrm{eff}} := \max_{k \in [K]} \frac{\left(d_k - s_k) \log(d_k) + (d_{k-1} -s_{k-1})\log(d_{k-1}\right)}{2}$ is an effective layer width\footnote{
We note the effective width is at worst $\max_k d_k \log(d_k)$ and could be larger than actual width depending on the sparsity vector $\vc{s}$. In contrast, for large $\vc{s}$, $d_{\mathrm{eff}} \ll \max_k d_k$.}.
\end{theorem}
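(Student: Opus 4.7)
The proof will instantiate the de-randomized PAC-Bayes framework of \citet{nagarajan2018deterministic} (see \Cref{app: subsec: derandom}) with posterior $\mathcal{Q} := \mathcal{N}(h,\bm{\sigma}^2_{\mathrm{sparse}})$ centred at the learned predictor and a data-independent prior $\mathcal{P}$. The key input to the de-randomization is a noise-resilience statement: with high probability over $\hat{h}\sim\mathcal{Q}$, the output $\hat h(\x)$ stays within $\tau := \zeta_{K+1}\gamma_{K+1}$ of $h(\x)$ on every ``good'' training or test input. Once this is secured, a standard margin conversion via the Bartlett--Foster--Telgarsky inequality (quoted above) relates $R_0(h)$ to an empirical margin risk at scale $4\tau=4\zeta_{K+1}\gamma_{K+1}$.

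The first step is to show $\hat h\in\mathcal{B}(h,\bm\epsilon)$ with probability at least $1-1/(2\sqrt m)$ over $\hat h\sim\mathcal{Q}$. For each layer $k$, the perturbation $\What_k-\W_k$ is an i.i.d.\ Gaussian matrix with entry variance $\sigma_k^2$. I would bound both sparse induced norms required by \Cref{def: nearby-networks} via Gaussian concentration of the operator norm of a fixed $(d_k-s_k)\times(d_{k-1}-s_{k-1})$ sub-matrix (the standard $\sigma(\sqrt{p}+\sqrt{q}+t)$ bound with failure $2e^{-t^2/2}$), then union-bound over the $\binom{d_k}{s_k}\binom{d_{k-1}}{s_{k-1}}$ sparsity patterns. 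The log of this count is at most $2d_{\mathrm{eff}}$, and this is precisely why the prescribed $\sigma_k\propto \epsilon_k\xi_k/\sqrt{2d_{\mathrm{eff}}+\log(2(K+1)\sqrt m)}$ simultaneously calibrates $\|\What_k-\W_k\|_{(s_k,s_{k-1})}\leq \epsilon_k\xi_k\sqrt{1+\eta_k}$ and $\|\What_k-\W_k\|_{(d_k-1,s_{k-1})}\leq \epsilon_k\xi_k$ with per-layer failure $\leq 1/(2(K+1)\sqrt m)$. A final union bound across the $K+1$ layers closes this step.

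The second step couples the bulk event with per-sample stability of the sparse local radii. On any input $\x$ where $r_k(h,\x)\geq 3\gamma_k$ for all $k$, I would iterate \Cref{lemma: slr}(3) layer-by-layer, feeding the propagated input-perturbation bound $\|\hat{\x}_{k-1}-\x_{k-1}\|_2\leq \zeta_{k-1}\gamma_{k-1}$ from the previous layer into the next, to obtain $|r_k(\hat h,\x)-r_k(h,\x)|\leq \gamma_k$ and hence $r_k(\hat h,\x)\geq 2\gamma_k\geq\gamma_k$. The threshold $3\gamma_k$ (rather than $2\gamma_k$) carries the extra slack needed for the hypothesis of \Cref{proposition: reduced-sensitivity} to still apply to the perturbed network, which then delivers $\|\hat h(\x)-h(\x)\|_\infty\leq \tau$ on all good inputs simultaneously. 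Inputs failing the threshold at some layer are precisely those flagged by $\ell_{\mathrm{sparse}}(h,\x)$ and are charged trivially by upper-bounding their loss by one; the coefficient $2K/m$ then arises from the union across the $K$ hidden layers together with the doubling that the margin-based translation between the losses of $h$ and $\hat h$ incurs.

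The final step is to assemble these pieces inside the derandomized PAC-Bayes inequality. Applying $|\rho(h,\z)-\rho(\hat h,\z)|\leq 2\|h(\x)-\hat h(\x)\|_\infty\leq 2\tau$ once to pass from $R_0(h)$ to $R_{2\tau}(\hat h)$ on the population side, and once more to pass from $\hat R_{2\tau}(\hat h)$ back to $\hat R_{4\tau}(h)$ on the empirical side, delivers the margin $4\zeta_{K+1}\gamma_{K+1}$; the PAC-Bayes penalty contributes the $\sqrt{\mathrm{KL}(\mathcal{N}(h,\bm\sigma^2_{\mathrm{sparse}})\|\mathcal P)/m}$ term, with the $\tilde{\mathcal O}$ absorbing standard $\log(1/\delta)$ and $\log m$ factors. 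The main technical obstacle is the simultaneous Gaussian concentration of \emph{two} different sparse induced norms of $\What_k-\W_k$ by a single variance $\sigma_k$: each sparsity pattern needs its own deviation inequality and recovering the constants $2d_{\mathrm{eff}}+\log(2(K+1)\sqrt m)$ exactly requires careful trade-off between the Gaussian tail exponent and the combinatorial union bound. A secondary subtlety is ensuring that the inductive propagation of perturbation errors through the layer-wise sparse local radii is tight enough that the $3\gamma_k$ threshold remains useful in the experiments, yet loose enough to survive all sources of slack in the multi-layer argument.
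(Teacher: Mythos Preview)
Your plan captures the Gaussian concentration step (essentially \Cref{lemma: random-perturb-network}) and the use of \Cref{proposition: reduced-sensitivity} correctly, but it has a genuine gap in the ``second step'' and in how you account for the $2K$ coefficient.

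When you instantiate \Cref{lemma: derandomization} with the margin property $\rho$, the bound contains both an empirical noise-resilience term $\mu_{\samp_T}$ and a \emph{population} noise-resilience term $\mu_{\cD}$. Your argument shows that if $r_k(h,\x)\geq \gamma_k$ for all $k$ then $h$ is noise-resilient at $\x$; hence $\mu_{\cD}\leq \prob_{\z\sim\cD}[\exists k:\, r_k(h,\x)<\gamma_k]$. But this is still a population quantity, and you cannot ``charge'' it directly to the empirical $\ell_{\mathrm{sparse}}$ as you propose. Nothing in your outline converts the population probability that some layer has small sparse local radius into an empirical one.

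The paper's proof handles this by applying \Cref{lemma: derandomization} \emph{again}, now with the layer-wise radii $\{r_k-\gamma_k\}$ themselves as the properties and $\{2\gamma_k\}$ as the thresholds (this is where the $3\gamma_k=\gamma_k+2\gamma_k$ in $\ell_{\mathrm{sparse}}$ actually originates, not from the slack you describe). Bounding the new $\mu_{\cD}$ term for these radius-properties uses the stability statement in \Cref{lemma: slr}(3) to obtain $\mu_{\cD}(h,\{r_n-\gamma_n\}_{n\leq k})\leq \prob_{\z\sim\cD}[\exists\, n\leq k-1:\, r_n(h,\x)<\gamma_n]$, i.e.\ the same population event with one fewer layer. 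Unrolling this recursion from $K$ down to $1$ is what produces the $2K$ empirical sparse-loss terms (one from $\mu_{\samp_T}$ and one from the empirical risk at each level) and the sum of per-layer KL contributions in \Cref{theorem: sparsity-aware-expanded}; the simplified \Cref{theorem: sparsity-aware-single} then follows by collapsing these to the worst case. Your outline is missing this recursive invocation, and without it the population term $\mu_{\cD}$ remains uncontrolled.
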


The notation $\tilde{\mathcal O}$ above hides logarithmic factors (see \Cref{app: sparsity-aware} for a complete version of the bound). This result bounds the generalization error of a trained predictor as a function of three terms. Besides the empirical risk with margin threshold $4\zeta_{K+1}\gamma_{K+1}$, 
the risk is upper bounded by an empirical sparse loss that measures the proportion of samples (in the training data) that do not achieve a sufficiently large sparse radius at any layer. Lastly, as is characteristic in PAC-Bayes bounds, we see a term that depends on the distance between the prior and posterior distributions, the latter centered at the obtained (data-dependent) predictor. 
The posterior variance $\bm{\sigma^2}_{\mathrm{sparse}}$ is determined entirely by the base hyper-parameters. 
Finally, note that the result above holds for any prior distribution $\mathcal{P}$. Before moving on, we comment on the specific factors influencing this bound.

\paragraph{Sparsity.} The result above depends on the sparsity by the choice of the parameter $\vc s$. One can always instantiate the above result for $\vc{s}=\vc{0}$, corresponding to a global sensitivity analysis. At this trivial choice, the sparsity loss vanishes (because the sparse radius is infinite) and the bound is equivalent to an improved (derandomized) version of the results by \cite{neyshabur2018a}. 
The formulation in \Cref{theorem: sparsity-aware-single}  enables a continuum of choices (via hyper-parameters) suited to the trained predictor and sample data. 
A larger degree of sparsity at every layer results in a tighter bound since the upper bounds to the sensitivity of the predictor is reduced (as only reduced matrices are involved in its computation). In turn, this reduced sensitivity leads to a lower empirical margin risk by way of a lower threshold $4 \zeta_{K+1} \gamma_{K+1}$. 
Furthermore, the effective width -- determining the scale of posterior -- is at worst $\max_k d_k \log(d_k)$ (for $\vc{s}=0$) , but for large $\vc{s}$, $d_{\mathrm{eff}} \ll \max_k d_k$.

\paragraph{Sensitivity.}
Standard sensitivity-based generalization bounds generally  depend directly on the global Lipschitz constant that scales as $\mathcal{O}(\prod_{k=1}^{K} \norm{\W_k}_2)$. For even moderate-size models, such dependence can render the bounds vacuous. 
Further recent studies suggest that the layer norms 
can even increase with the size of the training sets showing that, even for under-parameterized models, generalization bounds may be vacuous \citep{nagarajan2019uniform}. 
Our generalization bound does \emph{not} scale with the reduced Lipschitz constant $\zeta_{K+1}$: while larger (reduced) Lipschitz constants can render the empirical sparse loss closer to its maximum value of 1, the bound remains controlled due to our choice of modelling \emph{relative} perturbations of model parameters.

\paragraph{Dependence on Depth.}
Unlike recent results \citep{Bartlett2017SpectrallynormalizedMB, Neyshabur2015NormBasedCC, neyshabur2018a, neyshabur2018the}, our bound is not  exponential with depth. However, the sensitivity bounds $\zeta_k$ and radius thresholds $\gamma_k$ are themselves exponential in depth. While the empirical risk and sparse loss terms in the generalization bounds depend on $\zeta_k, \gamma_k$, they are bounded in $[0,1]$. In turn, by choosing the prior to be a Gaussian $P=\mathcal{N}(h_{\mathrm{prior}}, \bm{\sigma}_{\mathrm{sparse}}^2)$, the KL-divergence term can be decomposed into layer-wise contributions, 
$\mathrm{KL}\left(  
  \mathcal{N}\left(h, \bm{\sigma}_{\mathrm{sparse}}^2\right)
 \;|| \;
 \mathcal{N}(h_{\mathrm{prior}}, \bm{\sigma}_{\mathrm{sparse}}^2) 
 \right)
= \sum_{k=1}^{K+1} \frac{\norm{\W_k - \W_{\mathrm{prior},k}}^2_F}{2 \sigma^{2}_{k}}. 
$
Hence, the KL divergence term does not scale with the product of the relative perturbations (like $\gamma_k$) or the product of layer norms (like $\zeta_k$).

\paragraph{Comparison to related work.} 
Besides the relation to some of the works that have been mentioned previously, our contribution is most closely related to those approaches that employ different notions of reduced effective models in developing generalization bounds. \cite{Arora2018StrongerGB} do this via a \emph{compression} argument, alas the resulting bound holds for the compressed network and not the original one. \cite{Neyshabur2017ExploringGI} develops PAC-Bayes bounds that clearly reflect the importance of \textit{flatness}, which in our terms refers to the loss effective sensitivity of the obtained predictor. Similar in spirit to our results, \cite{nagarajan2018deterministic} capture a notion of reduced active size of the model and presenting their derandomized PAC-Bayes bound (which we centrally employ here). While avoiding exponential dependence on depth, their result depends inversely with the minimum absolute pre-activation level at each layer, which can be arbitrarily small (and thus, the bound becomes arbitrarily large). Our analysis, as represented by \Cref{lemma: slr}, circumvents this limitation. Our constructions on normalized sparse radius have close connections with the \textit{normalized margins} from \cite{Wei2020ImprovedSC}, and our use of augmented loss function (such as our \textit{sparse loss}) resemble the ones proposed in \cite{Wei2019DatadependentSC}. Most recently, \cite{galanti2023norm} analyze the complexity of compositionally sparse networks, however the sparsity stems from the convolutional nature of the filters rather than as a data-dependent (and sample dependent) property.

\subsection{Hyper-parameter search} \label{sec:HyperParameterSearch}
For any fixed predictor $h$, there can be multiple choices of $\vc{s}, \bm{\xi}, \bm{\eta}$ such that $h$ is in the corresponding hypothesis class. 
In the following, we discuss strategies to search for suitable hyper-parameters that can provide tighter generalization bounds. 
To do so, one can instantiate a grid of candidate values for each hyper-parameter that is independent of data. 
Let the grid sizes be $(T_{\vc{s}}, T_{\bm{\xi}}, T_{\bm{\eta}}, T_{\bm{\epsilon}})$, respectively.
We then instantiate the generalization bound in \Cref{theorem: sparsity-aware-single} for each choice of hyper-parameters in the cartesian product of grids with a reduced failure probability $\delta_{\mathrm{red}} = \frac{\delta}{T_{\vc{s}} T_{\bm{\xi}} T_{\bm{\eta}} T_{\bm{\epsilon}}}$.
By a simple union-bound argument, all these bounds hold simultaneously with probability $(1-\delta)$. 
In this way, 
for a fixed $\delta$, the statistical cost above is $\sqrt{\log(T_{\vc{s}} T_{\bm{\xi}} T_{\bm{\eta}} T_{\bm{\epsilon}})}$ as the failure probability dependence in \Cref{theorem: sparsity-aware-single} is $\sqrt{\log\left(\frac{1}{\delta_{\mathrm{red}}}\right)}$. 
The computational cost of a na\"ive search is $\mathcal{O}(T_{\vc{s}} T_{\bm{\xi}} T_{\bm{\eta}} T_{\bm{\epsilon}})$. 
In particular, for multilayer networks, to exhaustively search for a sparsity vector requires a grid of size $T_{\vc{s}} := \prod_{k=1}^{K} d_k$ rendering the search infeasible. Nonetheless, we shall soon show that by employing a greedy algorithm one can still obtain tighter generalization bounds with significantly lesser computational cost. Moreover, these hyper-parameters are not independent, and so we briefly describe here how this optimization can be performed with manageable complexity.

\paragraph{Norm Hyper-parameters $(\bm{\xi},\bm{\eta})$:}  
One can choose $(\bm{\xi}, \bm{\eta})$ from a grid (fixed in advance) of candidate values, to closely match the true properties of the predictor. 
For networks with zero bias, w.l.o.g. one can normalize each layer weight $\W_k \rightarrow \Wtil_k:= \frac{1}{\norm{\W_k}_{(d_k-1,s_{k-1})}} \W_k$ to ensure that $\norm{\Wtil_k}_{(d^{k}-1, s_{k-1})} = 1$ without changing the prediction\footnote{This is not true for networks with non-zero bias. In networks with bias, one can still employ a grid search like in \cite{Bartlett2017SpectrallynormalizedMB}.}. 
The predicted labels, babel function, sparse local radius , margin and the generalization bound in \Cref{theorem: sparsity-aware-single} are all invariant to such a scaling. For the normalized network we can simply let $\xi_k := 1$ for all $k$. Fixing $\bm{\xi}$ this way results in no statistical or computational cost (beyond normalization). 
For discretizing $\bm{\eta}$, we can leverage the fact that for all $(s_k, s_{k-1})$, the reduced babel function is always less than $d_k-s_k-1$ -- since the inner products are scaled by the square of the sparse norms. Thus, we can construct a grid in $[0,1]$ with $T_\eta$ elements, which can be searched efficiently (see \Cref{app:hyper-search} for further details).

\paragraph{Sparsity parameter $\vc{s}$:}
The sparsity vector $\vc{s}$ determines the degree of structure at which we evaluate the generalization of a fixed predictor. For a fixed predictor and relative sensitivity vector $\bm{\epsilon}$, a good choice of $\vc{s}$ is one that has sufficiently large sparse local radii on the training sample resulting in small average sparse loss, $\frac{1}{m} \sum_{\x^{(i)} \in \samp_T} \ell_{\mathrm{sparse}}(h, \x^{(i)})$.  
At the trivial choice of sparsity $\vc{s}=\mathbf{0}$, 
for any choice of $\bm{\epsilon}$, the above loss is exactly zero.
In general, at a fixed $\bm{\epsilon}$, this loss increases with larger (entrywise) $\vc{s}$. 
At the same time, the empirical margin loss term $\hat{R}_{4\zeta_{K+1}\gamma_{K+1}}(h)$ decreases with increasing $\vc{s}$ (since  $\zeta_{K+1}$ grows).
This reflects an inherent tradeoff in the choice of $(\vc{s}, \bm{\epsilon})$ to balance the margin loss and the sparse loss (in addition to the KL-divergence).

For any $\bm{\epsilon}$ and a data point $\z=(\x,y)$, we employ a greedy algorithm to find a
sparsity vector $s^*(\x, \bm{\epsilon})$ in a layer wise fashion such that the loss incurred is zero, i.e. so that $r_k(h,\x) \geq 3\gamma_k$ for all $k$.
At each layer, we simply take the maximum sparsity level with sufficiently large radius. 
The computational cost of such an approach  is $\log_2\left(\prod_{k=1}^{K} d_k\right)$. 
One can thus collect the sparsity vectors $s^*(\x, \bm{\epsilon})$ across the training set and choose the one with sample-wise minimum, so that the average sparse loss vanishes.
Of course, one does not necessarily need the sparse loss to vanish; one can instead choose $\vc{s}$ simply to \emph{control} the sparse loss to a level of $\frac{\alpha}{\sqrt{m}}$. We expand in \Cref{app:hyper-search} how this can done.

\paragraph{Sensitivity vector $\bm{\epsilon}$:} Lastly, the relative sensitivity vector $\bm{\epsilon}$ represents the size of the posterior and desired level of sensitivity in layer outputs upon parameter perturbations. 
Since $\epsilon_k$ denotes \textit{relative perturbation} we can simply let it be the same across all layers. i.e.  $\bm{\epsilon} = \epsilon \cdot [1, \ldots, 1]$.

In summary, as we expand in \Cref{app:hyper-search}, we can compute a best in-grid generalization bound in $\mathcal{O}\left(T_{\bm{\epsilon}} \cdot \log_2\left(\prod_{k=1}^K d_k\right) \cdot \log_2(T_{\bm{\eta}}) \cdot (\sum_{k=1}^K d_k d_{k-1}) \right).$

\section{Numerical Experiments}\label{sec:Experiments}

In this last section we intend to demonstrate the derived bounds on a series of feed-forward networks, of varying width and depth, on MNIST. As we now show, the resulting bounds are controlled and sometimes non-vacuous upon the optimization over a discrete grid for hyper-parameters, as explained above. 

\paragraph{Experimental Setup:}
We train feed-forward networks $h$ with weights $\{\W_k\}_{k=1}^{K+1}$ where $\W_k \in \mathbb{R}^{d_k \times d_{k-1}}$ using the cross-entropy loss with stochastic gradient descent (SGD) for 5,000 steps with a batch size of 100 and learning rate of $0.01$. The MNIST training set is randomly split into train and validation data (55,000~:~5,000). The models are optimized on the training data and the resulting measures are computed on validation data. 
To evaluate scaling with the number of samples, $m$, we train networks on randomly sampled subsets of the training data of increasing sizes from 20\% to 100\% of the  training set. Because of the chosen architectures, all of these models are over-parametrized (i.e. having more parameters than training samples).

Recall that the bound on generalization error in \Cref{theorem: sparsity-aware-single} depends on the KL divergence between a posterior centered at trained predictor $h$, $\mathcal{N}(h, \bm{\sigma^2}_{\mathrm{sparse}})$, and the prior $P = \mathcal{N}(h_{\mathrm{prior}}, \bm{\sigma^2}_{\mathrm{sparse}})$. Thus, each model is encouraged to be close to its initialization via a regularization term. In this way, we minimize the following regularized empirical risk based on the cross-entropy loss as well as 
a regularization term with penalty $\lambda$ (set as $\lambda = 1.0$ for all experiments for simplicity),
\begin{equation*}
\min_{\{\W_k\}_{k=1}^{K+1}} ~~ \frac{1}{m}\sum_{i=1}^m \ell_{\mathrm{cross-ent}} \Big(h, (\x_{i},y_{i})\Big) + \frac{\lambda}{K+1}\sum_{k=1}^{K+1} \norm{\W_k - \W_{\mathrm{prior},k}}^2_F. 
\end{equation*}
\paragraph{Choice of Prior:}
As with any PAC-Bayes bound, choosing a prior distribution with an appropriate inductive bias is important.  
For example, optimizing the choice of prior by instantiating multiple priors simultaneously was shown to be an effective procedure to obtain good generalization bounds \citep{Langford2001NotBT, Dziugaite2017ComputingNG}.
In this work, we evaluate our bounds for two choices of the prior: \emph{a)} a data-independent prior, $P_0:= \mathcal{N}(h_{\mathbf{0}}, \bm{\sigma^2}_{\mathrm{sparse}})$ centered at a model with zero weights, $h_{\mathbf{0}}$; and \emph{b)} a data-dependent prior $P_{\mathrm{data}}:= \mathcal{N}(h_{\mathrm{init}}, \bm{\sigma^2}_{\mathrm{sparse}})$ centered at a model $h_{\mathrm{init}}$ obtained by training on a small fraction of the training data (5\% of all training data). 
Note that this choice is valid, as the base hyper-parameter $(\vc{s}, \bm{\xi}, \bm{\eta}, \bm{\epsilon})$ are chosen independent of data, and the empirical risk terms in the bound are not evaluated on the small subset of data $h_{\mathrm{init}}$ is trained on.

\paragraph{Generalization bounds across width:}
We first train a 2-layer (1 hidden layer) fully connected neural network with increasing widths, from 100 to 1,000 neurons. Note that in all cases these models are over-parametrized.  
In \Cref{fig:gen100,fig:gen500,fig:gen1000} we plot the true risk (orange curve) and the generalization bounds (blue curve) from \Cref{theorem: sparsity-aware-single} across different sizes of training data and for the two choices of priors mentioned above. 
We observe that our analysis, when coupled with data-dependent prior $P_{\mathrm{data}}$, generates non-vacuous bounds for a network with width of 100. Even for the na\"ive choice of the prior $P_0$, the bound is controlled and close to 1. Furthermore, note that our bounds remain controlled for larger widths. In \Cref{app:exp}, we include complementary results depicting our generalization bounds for 3-layer networks.

\begin{figure}
\centering
\subcaptionbox{Model width: 100.\label{fig:gen100}}
{\includegraphics[width=0.495\textwidth]{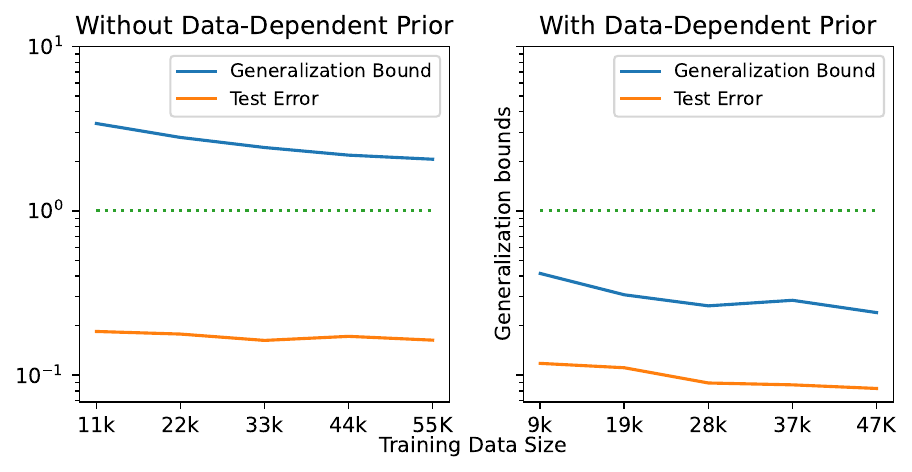}}
\subcaptionbox{Model width: 500.\label{fig:gen500}}
{\includegraphics[width=0.495\textwidth]{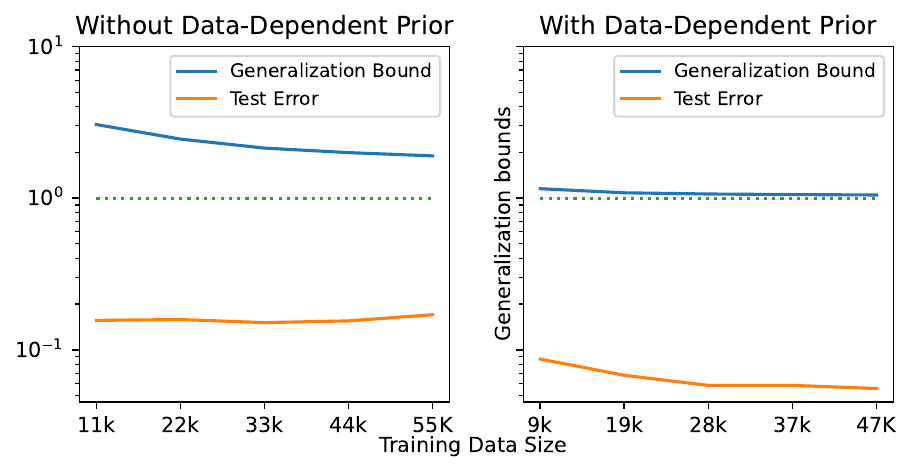}}
\\[5pt]
\subcaptionbox{Model width: 1,000. \label{fig:gen1000}}
{\includegraphics[width=0.5\textwidth]{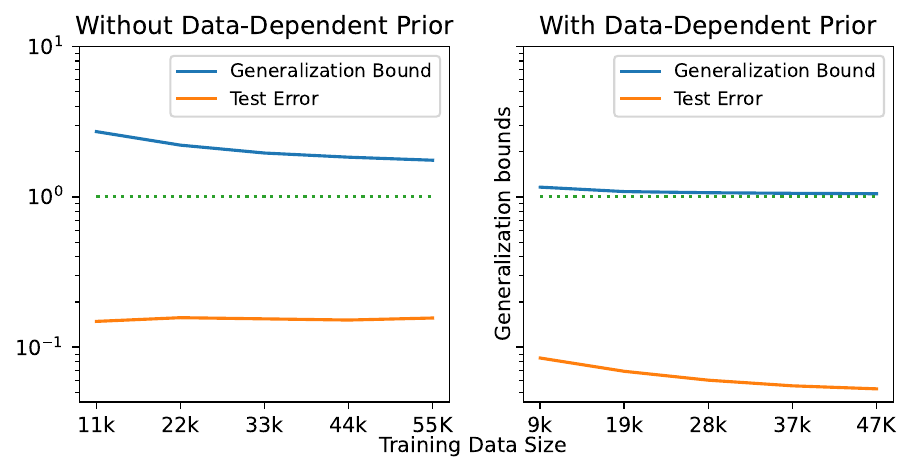}}
\caption{Generalization error of a 2-layer model of different widths trained on MNIST.}\label{fig:generalization_bounds}
\end{figure}


\paragraph{Effective Activity ratio:}
Lastly, we intend to illustrate the degree of sparsity achieved in the obtained models that allow for the bounds presented in \Cref{fig:generalization_bounds}. For each data point $\x$ and relative perturbation level $\epsilon$, we define the Effective Activity ratio $\kappa(\x,\epsilon):= \frac{\sum_{k} (d_k-s_k)(d_{k-1}-s_{k-1})}{\sum_k d_k d_{k-1}}$ where $\vc{s} = s^*(\x, \epsilon)$, the greedy sparsity vector chosen such that the sparse loss in \Cref{theorem: sparsity-aware-single} is zero. 
In this way, $\kappa(\x,\epsilon)$ measures the reduced local dimensionality of the model at  input $\x$ under perturbations of relative size $\epsilon$. When $\kappa(\x,\epsilon) = 1$, there are no sparse activation patterns that are stable under perturbations, and the full model is considered at that point. On the other hand, when $0<\kappa(\x, \epsilon) \ll 1$, the size of stable sparse activation patterns $s^*(\x,\epsilon)_k$ at each layer is close to the layer dimension $d_k$. 
\Cref{theorem: sparsity-aware-single} enables a theory of generalization that accounts for this local reduced dimensionality. 

We present the effective activity rations for a trained 3-layer model in \Cref{fig:3layer}, and include the corresponding results for the 2-layer model in \Cref{app:exp} for completeness. The central observation from these results is that trained networks with larger width have \emph{smaller} effective activity ratios across the training data.
In \Cref{fig:HEA3} (as well as in \Cref{fig:HEA2} for the 2-layer model), the distribution of effective activity ratio across the training data at $\epsilon=10^{-4}$ shows that smaller width networks have less stable sparsity. 
In turn, \Cref{fig:AEA3} and \Cref{fig:AEA2} demonstrate that this effect is stronger for smaller relative perturbation levels. 
This observation is likely the central reason of why our generalization bounds do not increase drastically with model size.


\begin{figure}
    \centering
\begin{minipage}{0.49\textwidth}
\includegraphics[trim = 0 0 0 40, clip, width=\textwidth]{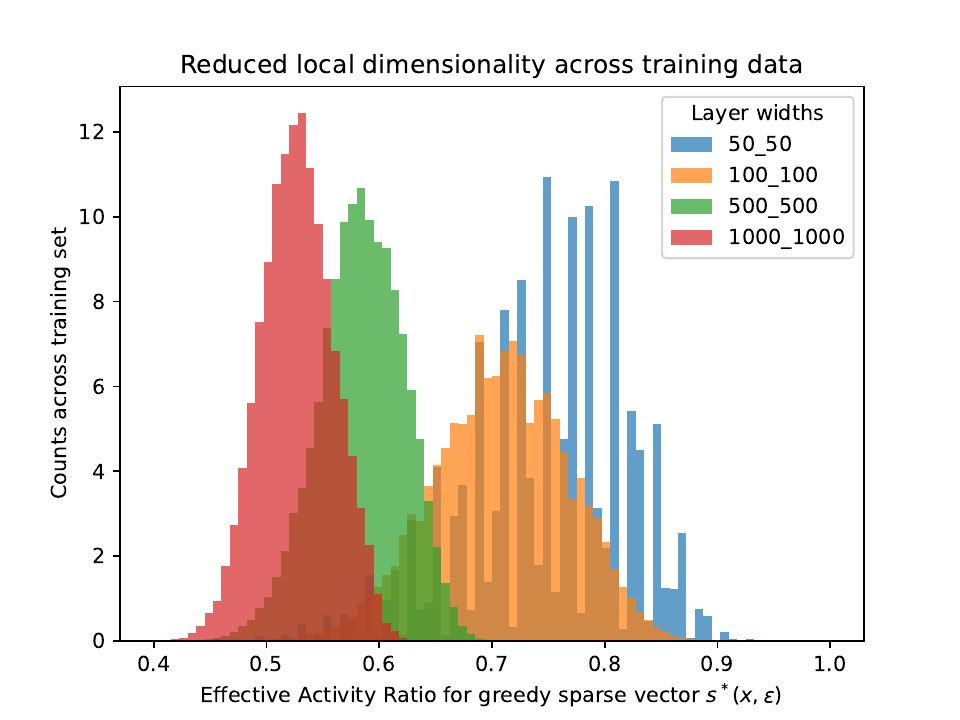}
\subcaption{Histogram of Effective Activity Ratio at $\epsilon = 10^{-4}$}
\label{fig:HEA3}
\end{minipage}%
\begin{minipage}{0.49\textwidth}
\includegraphics[trim = 0 0 0 40, clip, width=\textwidth]{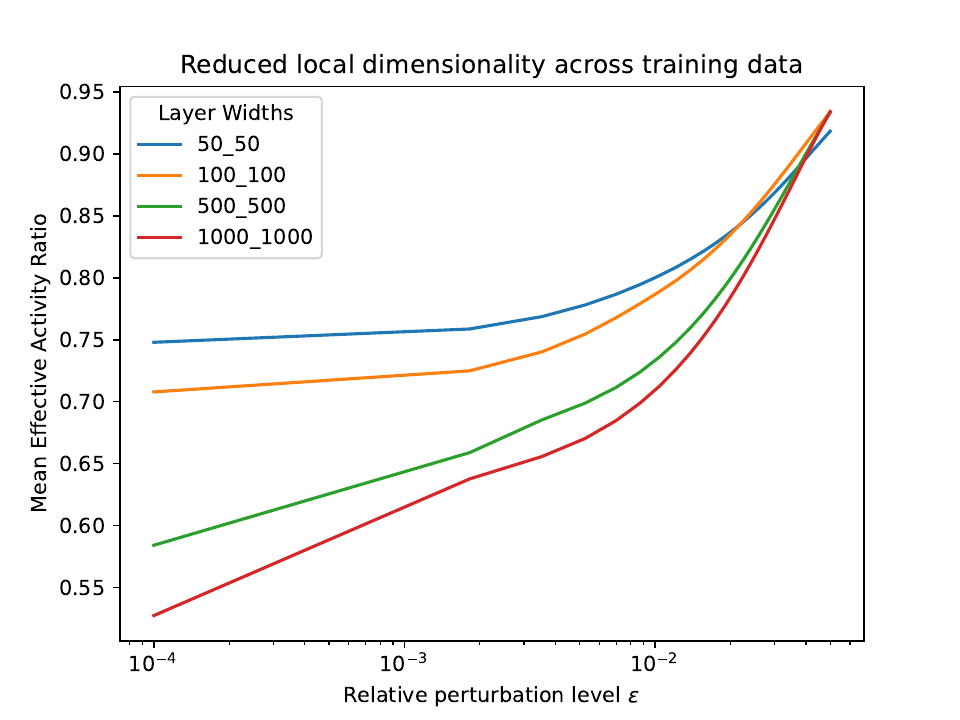}
\subcaption{Average Effective Activity Ratio}
\label{fig:AEA3}
\end{minipage}
\caption{Effective activity ratio $\kappa(\x, \epsilon)$ based on greedy sparsity vector $s^{*}(\x,\epsilon)$ for 3-layer networks  (smaller implies sparser stable activations).}
\label{fig:3layer}
\end{figure}

\section{Conclusion}\label{sec:Conclusion}

This work makes explicit use of the degree of sparsity that is achieved by ReLU feed-forward networks, reflecting the level of structure present in data-driven models, but without making any strong distributional assumptions on the data. Sparse activations imply that only a subset of the network is active at a given point. By studying the stability of these local sub-networks, and employing tools of derandomized PAC-Bayes analysis, we are able to provide bounds that exploit this effective reduced dimensionality of the predictors, as well as avoiding exponential dependence on the sensitivity of the function and of depth. Our empirical validation on MNIST illustrates our results, which are always controlled and sometimes result in non-vacuous bounds on the test error. Note that our strategy to instantiate our bound for practical models relied on a discretization of the space of hyper-parameters and a greedy selection of these values. This is likely suboptimal, and the grid of hyper-parameters could be further tuned for each model. Moreover, in light of the works in \citep{Dziugaite2017ComputingNG, Dziugaite2018EntropySGDOT, Zhou2019NonvacuousGB}, we envision optimizing our bounds directly, leading to even tighter solutions.

\section*{Acknowledgments}
We kindly thank Vaishnavh Nagarajan for helpful conversations that motivated the use of de-randomized PAC-Bayesian analysis. This work was supported by NSF grant CCF 2007649.

\bibliographystyle{plainnat}
\bibliography{references}
\newpage 
\appendix

\section{Missing Proofs}
\subsection{Sparsity in feed-forward maps}
In this subsection we provide explicit proofs for all theorems corresponding to stability of index sets and reduced size (such as \Cref{lemma: slr,proposition: reduced-size-multi}) and sensitivity of outputs (such as \Cref{proposition: reduced-sensitivity})

\subsubsection{Stability of Index sets in a Single layer Feed-forward Map}\label{app: lemma: slr}
\begin{proof}(For \Cref{lemma: slr})
    To prove the first statement, note that for all $i\in  I(\W, \x, s_1)$, 
    \begin{equation}\label{eq: index-rsl}
    \max\left\{0, - \frac{\Wvec{}{i} \cdot \x}{\xi_1\cdot \zeta_0} \right\} \geq \rsl(\W, \x, s_1) 
    \end{equation}
    Hence if $\rsl(\W, \x, s_1) > 0$, then $I(\W, \x, s_1)$ is inactive.  
    Now consider any perturbed weight $\What$ and any perturbed input $\xhat$ such that $\norm{\xhat-\x} \leq d_0-s_0$. The absolute difference between the normalized pre-activation values at each index can be bounded, 
    \begin{align*}
    &\left\lvert \hat{\vc{w}}[i] \cdot \xhat - \Wvec{}{i} \cdot\x \right\rvert  \\
    &=  \left\lvert \;\Wvec{}{i} \cdot\left(\xhat - \x\right) 
     + \left(\hat{\vc{w}}[i]  - \Wvec{}{i} \right) \cdot \x \
     + \left(\hat{\vc{w}}[i]  - \Wvec{}{i} \right) \cdot (\xhat-\x)\; \right\rvert\\
    &\leq \underset{|J_{0}| = d_0-s_0}{\max} \left(\norm{\mathcal{P}_{J_0}(\vc{w}[i])}_2 \cdot \norm{\xhat-\x}_2 + \norm{\mathcal{P}_{J_0}\left(\hat{\vc{w}}[i]-\vc{w}[i]\right)}_2 \norm{\x}_2 + \norm{\mathcal{P}_{J_{0}}\left(\hat{\vc{w}}[i]-\vc{w}[i]\right)}_2 \norm{\xhat-\x}_2\right).\\
    &\leq
    \xi_1\cdot \norm{\xhat-\x}_2  
    + \norm{\What-\W}_{(d_1-1,s_0)} \cdot \zeta_0
    + \norm{\What-\W}_{(d_1-1,s_0)} \cdot \norm{\xhat-\x}_2  \\
    &= \xi_1\cdot \zeta_0 \cdot \left(
    -1 + \left(1 +  \frac{\norm{\xhat-\x}_2}{\zeta_0} \right)\left(1+\frac{\norm{\What-\W}_{(d_1-1,s_0)}}{\xi_1}
    \right) \right)
    \end{align*}
    The above inequalities show that, 
\begin{equation}\label{eq: index-rsl-2}
\left\vert  \frac{
\vc{w}[i] \cdot \x - \hat{\vc{w}}[i] \cdot \xhat}
{\xi_1\cdot \zeta_0} \right\rvert 
\leq 
-1 + \left(1 + \frac{\norm{\xhat-\x}_2}{\zeta_0}\right)
\left(1 + \frac{\norm{\What-\W}_{(d_1-1,s_0)}}{\xi_1}\right)
\end{equation}
This proves the second statement by plugging in the bounds on the relative perturbation terms above and using Equation \eqref{eq: index-rsl} to note that,
    \begin{align*}
    &\rsl(\W, \x, s_1) > \left(
    -1 + \left(1 +  \frac{\norm{\xhat-\x}_2}{\zeta_0} \right)\left(1+\frac{\norm{\What-\W}_{(d_1-1,s_0)}}{\xi_1}\right)
    \right) \\
    &\implies \;\forall\; i\in I(\W,\x,s_1), \;\;
   \frac{-\vc{w}[i]\cdot \x}{ \xi_1 \zeta_0} > \left(
    -1 + \left(1 +  \frac{\norm{\xhat-\x}_2}{\zeta_0} \right)\left(1+\frac{\norm{\What-\W}_{(d_1-1,s_0)}}{\xi_1}\right)
    \right) 
    \\
    &\implies \;\forall\; i\in I(\W,\x,s_1), \;\;
    \frac{-\hat{\vc{w}}[i]\cdot \xhat}{ \xi_1 \zeta_0} > 0
    \end{align*}
    Thus when the sparse local radius is large as stated, the index set $I(\W,\x,s_1)$ is inactive for $\hat{\Phi}(\xhat)$. As a special case, when $\epsilon=0$, the same logic implies that the index set is inactive for $\Phi(\xhat)$.
    It is left to prove the stability of the sparse local radii.

For the final statement, recall the definitions of sparse local radius at  $(\W, \x)$ and $(\What, \xhat)$ respectively 
\begin{align*}
\rsl(\W, \x, s_1) 
&:= \act{
\textsc{sort}\left(-\left[ \frac{  \Wvec{}{i} \cdot \vc{x} }{
\xi_1\zeta_0
}  \right]_{i=1}^{d_1},\; s_1\right) },\\
\rsl(\What, \xhat, s_1) 
&:= \act{
\textsc{sort}\left(-\left[ \frac{ 
\hat{\vc{w}}[i] \cdot \xhat }
{
\xi_1\zeta_0
}  \right]_{i=1}^{d_1},\; s_1\right) }.
\end{align*} 
Since $\relu$ is 1-Lipschitz, hence the distance between the radius measurements can be bounded as,
\begin{align*}
&|\rsl(\What, \xhat, s_1)  - \rsl(\W, \x, s_1)| \\
&\leq 
\left \lvert 
\textsc{sort}\left(-\left[ \frac{ 
\hat{\vc{w}}[i] \cdot \xhat }
{
\xi_1\zeta_0
}  \right]_{i=1}^{d_1},\; s_1\right)
- \textsc{sort}\left(-\left[ \frac{ 
{\vc{w}}[i] \cdot \xhat }
{
\xi_1\zeta_0
}  \right]_{i=1}^{d_1},\; s_1\right)
\right \rvert 
\end{align*}
Then observe that, 
\begin{align*}
&\textsc{sort}\left(-\left[ \frac{ 
\hat{\vc{w}}[i] \cdot \xhat }
{
\xi_1\zeta_0
}  \right]_{i=1}^{d_1},\; s_1\right)\\
&= \max_{\hat{I} \subset [d_1], |\hat{I}|=s_1}~
\min_{i \in \hat{I}}
\frac{- \hat{\vc{w}}[i]\cdot \xhat}{\xi_1\zeta_0}   \\
&\geq 
\min_{i \in I(\W, \x, s_1)} 
\frac{- \hat{\vc{w}}[i]\cdot \xhat}{\xi_1\zeta_0}  
 \\
&\geq 
\min_{i \in I(\W, \x, s_1)} 
\frac{- \vc{w}[i]\cdot \xhat}{\xi_1\zeta_0}  
- 
\left(
    -1 + \left(1 +  \frac{\norm{\xhat-\x}_2}{\zeta_0} \right)\left(1+\frac{\norm{\What-\W}_{(d_1-1,s_0)}}{\xi_1}\right)
    \right) 
\quad \text{by Eq. \eqref{eq: index-rsl-2}}.\\
&= \textsc{sort}\left(-\left[ \frac{ 
{\vc{w}}[i] \cdot \xhat }
{
\xi_1\zeta_0
}  \right]_{i=1}^{d_1},\; s_1\right) - 
\left(
    -1 + \left(1 +  \frac{\norm{\xhat-\x}_2}{\zeta_0} \right)\left(1+\frac{\norm{\What-\W}_{(d_1-1,s_0)}}{\xi_1}\right)
    \right) 
\end{align*}
By repeating the same arguments, one can establish that, 
\begin{align*}
&\left \lvert 
\textsc{sort}\left(-\left[ \frac{ 
\hat{\vc{w}}[i] \cdot \xhat }
{
\xi_1\zeta_0
}  \right]_{i=1}^{d_1},\; s_1\right)
- \textsc{sort}\left(-\left[ \frac{ 
{\vc{w}}[i] \cdot \xhat }
{
\xi_1\zeta_0
}  \right]_{i=1}^{d_1},\; s_1\right)
\right \rvert \\
&\leq \left(
    -1 + \left(1 +  \frac{\norm{\xhat-\x}_2}{\zeta_0} \right)\left(1+\frac{\norm{\What-\W}_{(d_1-1,s_0)}}{\xi_1}\right)
    \right) 
\end{align*}
Hence the difference between the sparse local radii are bounded as required. 

Lastly, to show the reduced sensitivity of the predictor, notice the following. 
Let $I_0$ be an inactive index set in the input of size $s_0$ and let $J_0 := (I_0)^c$ be its complement. 
When $\rsl(\W, \x, s_1) > 0$, the index set $I(\W,\x,s_1)$ is inactive. Let $J_1:= (I(\W,\x,s_1))^c$ be its complement index set. Then,
\[
\Phi(\x) = \act{\W \x} = \act{\mathcal{P}_{J_1,J_0}(\W) \mathcal{P}_{J_0}(\x)}
\]
Hence, $\norm{\Phi(\x)}_2 \leq \norm{\mathcal{P}_{J_1, J_0}(\W)}_2 \norm{\x}_2 \leq \norm{\W}_{(s_1,s_0)} \zeta_0 \leq \xi_1\sqrt{1+\eta_1}\zeta_0$. Thus proving the reduced size of the outputs. 
When $\rsl(\W,\x,s_1) > -1+(1+\epsilon_0)(1+\epsilon_1)$, for perturbed inputs and weights as described, the index set $I(\W,\x,s_0)$ is inactive for $\Phi(\xhat)$ and $\hat{\Phi}(\xhat)$. Again let $J_1$, $J_0$ be the complement sets,
\begin{align*}
    \norm{\hat{\Phi}(\xhat) - \Phi(\x)}_2 
    & = \norm{\act{\What \xhat} - \act{\W \x}}_2\\
    & = \norm{\act{\mathcal{P}_{J_1, J_0}(\What) \mathcal{P}_{J_0}(\xhat)} - \act{\mathcal{P}_{J_1, J_0}(\W) \mathcal{P}_{J_0}(\x)}}_2\\
    & \leq \norm{\mathcal{P}_{J_1, J_0}(\What) \mathcal{P}_{J_0}(\xhat) - \mathcal{P}_{J_1, J_0}(\W) \mathcal{P}_{J_0}(\x)}_2\\
    & \leq \norm{\mathcal{P}_{J_1, J_0}(\What) \cdot \mathcal{P}_{J_0}(\xhat-\x)
    + \mathcal{P}_{J_1, J_0}(\What-\W)\cdot \mathcal{P}_{J_0}(\x)
    }_2\\
    & \leq \norm{\mathcal{P}_{J_1, J_0}(\What) \cdot \mathcal{P}_{J_0}(\xhat-\x)
    }_2 + \norm{\mathcal{P}_{J_1, J_0}(\What-\W)\cdot \mathcal{P}_{J_0}(\x)
    }_2\\
    & \leq (\norm{\mathcal{P}_{J_1, J_0}(\W)}_2 + \norm{\mathcal{P}_{J_1, J_0}(\What-\W)}_2)\cdot \epsilon_0 \zeta_0 + \epsilon_1 \xi_1\sqrt{1+\eta_1} \cdot \zeta_0.\\
    &\leq \left(\epsilon_0 + \epsilon_0\epsilon_1 + \epsilon_1\right)\cdot \xi_1\sqrt{1+\eta_1}\zeta_0\\
    &= \left(-1 + (1+\epsilon_0)(1+\epsilon_1)\right)\cdot \xi_1\sqrt{1+\eta_1}\zeta_0
\end{align*}

\end{proof}

\subsubsection{Reduced size of layer outputs in Multilayer networks}\label{app: proposition: reduced-size-multi}
Consider the layer-wise input domains, 
$
\cX_k := \{\vt \in \mathbb{R}^{d_k} ~|~ \norm{\vt}_2 \leq \zeta_k, \; \norm{\vt}_0 \leq d_k - s_k
\}$. 
\begin{proof}(For \Cref{proposition: reduced-size-multi})

From \Cref{lemma: slr}, $r_1(h,\x_0) >0$ guarantees existence of inactive index set $I_1(h,\x_0)$ and a reduced size of the output such that $\norm{\x_1} \leq \mathsf{M}_{\cX} \norm{\W}_{(s_1,0)}$. 
From \Cref{lemma: bound-submatrix-norm} and the definition of the hyper-parameters $\xi_1$ and $\eta_1$, 
\[
\norm{\W}_{(s_1,0)} \leq \norm{\W}_{d_1-1,0} \sqrt{1+\mu_{s_1,0}(\W)} \leq \xi_1 \sqrt{1+\eta_1}.\]
Hence $\norm{\x}_1 \leq \zeta_1$. 
Thus the statement of the theorem is true for $k=1$.

Assume that the statement is true for all layers $1\leq n\leq k$. 
Hence when $r_n(h, \x_0) > 0$ for all layers $1\leq n \leq k$, there exists index sets $I_1(h,\x_0), \ldots, I_k(h,\x_0)$ such that $\mathcal{P}_{I_n(h,\x_0)}(\x_n) = \mathbf{0}$ and $\norm{\x_n} \leq \zeta_n$ for all $1\leq n \leq k$. Thus $\x_n \in \mathcal{X}_n$ for all $1\leq n \leq k$.

If additionally $r_{k+1}(h,\x_0) > 0$, then by invoking \Cref{lemma: slr} for input $\x_k \in \cX_k$ and weight $\W_{k+1} \in \cW_{k+1}$, we see that $I_{k+1}(h,\x_0)$ is inactive for $\x_{k+1}$ and further \Cref{lemma: slr} shows that $\norm{\x_{k+1}} \leq \xi_{k+1}\sqrt{1+\eta_{k+1}} \cdot \zeta_k = \zeta_{k+1}$ as desired.
Hence the theorem is true for all $1\leq k\leq K$. 

For the final layer we note that since $I_K(h,\x_0)$ of size $s_K$ is inactive for $\x_K$, 
\[
\norm{h(\x_0)}_{\infty} \leq \norm{\mathcal{P}_{[C], J_K}(\W_{K+1})}_{2\rightarrow \infty} \norm{\x_K}_2 \leq \norm{\W_{K+1}}_{C-1, s_{K}} \norm{\x_K} \leq \xi_{K+1} \cdot \zeta_{K} = \zeta_{K+1}.
\] 
In the above inequality, we have used the fact that for any matrix the reduced $2\rightarrow \infty$ norm, 
\begin{align*}
\norm{\mathcal{P}_{[C], J_K}(\W_{K+1})}_{2\rightarrow \infty} 
\leq \max_{|J|=d_K-s_K} \max_{j\in C} \norm{\mathcal{P}_{J}(\vc{w}_{K+1}[j])}_{2} 
= \norm{\W_{K+1}}_{C-1, s_{K}}.
\end{align*}
\end{proof}

\subsubsection{Reduced sensitivity of layer outputs in Multilayer networks}\label{app: proposition: reduced-sensitivity}
\begin{proof}(For \Cref{proposition: reduced-sensitivity})
From \Cref{lemma: slr}, $r_1(h,\x_0) > \gamma_1$ guarantees existence of inactive index set $I_1(h,\x_0)$ such that $\mathcal{P}_{I_1(h,\x_0)}(\xhat_1) = \mathcal{P}_{I_1(h,\x_0)}(\x_1) = \mathbf{0}$. 
Further from \Cref{lemma: slr} (with input perturbation $\epsilon_0 = 0$), the distance between the first layer representations are bounded as $\norm{\xhat_1-\x_1} \leq \left(-1 + (1+\epsilon_0)(1+\epsilon_1) \right) \norm{\W}_{(s_1,0)} \mathsf{M}_{\cX} \leq \epsilon_1 \cdot \xi_1 \sqrt{1+\eta_1}\zeta_0 = \zeta_1 \gamma_1$. 
Thus the statement of the theorem is true for $k=1$.

Assume that the statement is true for all layers $1\leq n\leq k$. 
Thus there exists index sets $I_1(h,\x_0), \ldots, I_k(h,\x_0)$ such that $\mathcal{P}_{I_n(h,\x_0)}(\xhat_n) = \mathcal{P}_{I_n(h,\x_0)}(\x_n) = \mathbf{0}$ and the distance between the layer representations are bounded $\norm{\xhat_n -\x_n} \leq \zeta_n \cdot \gamma_n$ for all $1\leq n\leq k$.

From \Cref{proposition: reduced-size-multi}, due to the reduced size, $\x_{k} \in \cX_k$ and the sparse local radius $r_{k+1}(h,\x_0) \in [0,1]$. 
For the perturbed input to layer $k+1$, $\xhat_k$ we note that $\norm{\xhat_k-\x_k}_2 \leq \zeta_k \cdot \gamma_k$ and $\norm{\xhat_k-\x_k}_0 \leq d_k-s_k$. 
The perturbed weight $\What_{k+1}$ is such that $\norm{\What_{k+1} - \W_{k+1}}_{d_{k+1}-1, s_{k}} \leq \epsilon_{k+1} \xi_{k+1}$.
Hence applying \Cref{lemma: slr} on inputs $\x_{k} \in \cX_k$ and weight $\What_{k+1} \in \cW_{k+1}$ shows that 
the set $I_{k+1}(h,\x_0)$ is inactive for $\xhat_{k+1}$ 
and further, 
\[
\norm{\xhat_{k+1}-\xhat_k}_2 \leq \underbrace{\left(-1 + (1+\gamma_k)(1+\epsilon_{k+1})\right)}_{=: \gamma_{k+1}} \underbrace{\xi_{k+1} \sqrt{1+\eta_{k+1}} \cdot \zeta_k}_{=: \zeta_{k+1}}. 
\]
Hence the conclusion follows for all layers $1\leq k\leq K$. For the last layer, under the assumption that $r_k(h,\x_0) > \gamma_k$ for all $k$, we know that $I_{K}(h,\x_0)$ of size $s_K$ is inactive for both $\x_K$ and $\xhat_K$. Let $J_K$ be the complement of $I_{K}(h,\x_0)$. We can bound the distance between the network outputs as follows,
\begin{align*}
\norm{\hat{h}(\x_0) - h(\x_0)}_{\infty} 
&= \norm{\What_{K+1}\xhat_K - \W_{K+1}\x_K}_{\infty}\\
&= \norm{\mathcal{P}_{[C], J_K}(\What_{K+1})\mathcal{P}_{J_K}(\xhat_K) -  \mathcal{P}_{[C], J_K}(\W_{K+1}) \mathcal{P}_{J_K}(\x_K)}_{\infty}\\
& \leq \norm{\mathcal{P}_{[C], J_K}(\What) \cdot \mathcal{P}_{J_K}(\xhat_K-\x_K)
    + \mathcal{P}_{[C], J_K}(\What-\W)\cdot \mathcal{P}_{J_K}(\x_K)
    }_{\infty}\\
    & \leq \norm{\mathcal{P}_{[C], J_K}(\What) \cdot \mathcal{P}_{J_K}(\xhat_K-\x_K)
    }_{\infty} + \norm{\mathcal{P}_{[C], J_K}(\What-\W)\cdot \mathcal{P}_{J_K}(\x_K)
    }_{\infty}\\
    & \leq \left(\norm{\mathcal{P}_{[C], J_K}(\W)}_{2\rightarrow \infty} + \norm{\mathcal{P}_{[C], J_K}(\What-\W)}_{2\rightarrow \infty}\right)\cdot \norm{\mathcal{P}_{J_K}(\xhat-\x)}_2 \\
    & \qquad + \norm{\mathcal{P}_{[C], J_K}(\What-\W)}_{2\rightarrow \infty} \cdot \norm{\mathcal{P}_{J_K}(\x_K)}_2\\
    &\leq 
    (1+ \epsilon_{K+1} ) \underbrace{\xi_{K+1}\cdot \zeta_K}_{=\zeta_{K+1}}\gamma_K + \epsilon_{K+1} \underbrace{\xi_{K+1}\zeta_K}_{=\zeta_{K+1}}\\
    &= \zeta_{K+1} \cdot \underbrace{\left( (1+ \epsilon_{K+1})\gamma_K + \epsilon_{K+1}\right)}_{=\gamma_{K+1}}\\
    &= \zeta_{K+1} \cdot \gamma_{K+1}.
\end{align*}
\end{proof}

\subsection{Gaussian Sensitivity Analysis}
In this subsection we seek to understand the probability that $\hat{h} \in \mathcal{B}(h, \bm{\epsilon})$ when the perturbed layer weights are randomly sampled from $\mathcal{N}(h, \bm{\sigma}^2)$. 
For any failure probability $\delta > 0$, we define layer-wise normalization functions $\alpha^k, \beta^k : [0,1] \rightarrow \mathbb{R}^{>0}$ that are dimension-dependent (but data/weights independent),
\begin{align}\label{eq: alpha-normalization}
\alpha^k(s_k, s_{k-1}, \delta) 
&:= 
\left(\sqrt{d_k-s_k}+\sqrt{d_{k-1}-s_{k-1}}) + \sqrt{2\log\binom{d_k}{s_k} + 2\log\binom{d_{k-1}}{s_{k-1}} + 2\log\left(\frac{1}{\delta}\right)}\right)
\end{align}
We can now bound the probability that $\hat{h} \in \mathcal{B}(h, \vc{s}, \bm{\epsilon})$ when constructed using Gaussian perturbations. 
\begin{lemma}\label[lemma]{lemma: random-perturb-network}
Define layer-wise variance parameter $\bm{\sigma}(\delta)$ as 
\begin{align*}
\forall\; 1\leq k \leq K, ~~ \sigma^k(\delta) := \epsilon^k \min &\left\{ 
\frac{ 
\xi^k \sqrt{1+\eta^k}
}{
\alpha^k(s_k, s_{k-1}, \frac{\delta}{K+1}) 
 }   
 ,
 \frac{
 \xi^k 
 }{\alpha^k(d_k-1, s_{k-1}, \frac{\delta}{K+1}) 
}  \right\}. 
\end{align*}
and let $\sigma_{K+1}(\delta):= \epsilon_{K+1}\cdot \frac{\epsilon_{K+1}}{\alpha^k(C-1, s_{K}, \frac{\delta}{K+1})}$,
For any $h \in \cH$, with probability at least $(1-\delta)$, a Gaussian perturbed network sampled from $\mathcal{N}(h, \sigma^2(\delta))$ is in the local neighbourhood $\mathcal{B}(h,\bm{\epsilon})$. 
\end{lemma}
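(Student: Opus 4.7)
The plan is to reduce the statement to a concentration inequality for operator norms of Gaussian matrices, applied separately to each submatrix that enters the definition of the sparse induced norms, and then to union-bound across submatrices, layers, and the two norm conditions defining $\mathcal{B}(h,\bm{\epsilon})$. Let $\Delta_k := \hat{W}_k - W_k$. Under the posterior, $\Delta_k$ has i.i.d.\ $\mathcal{N}(0,\sigma_k^2)$ entries, and for any fixed $J_k \subset [d_k]$ and $J_{k-1} \subset [d_{k-1}]$ the restriction $\mathcal{P}_{J_k,J_{k-1}}(\Delta_k)$ is itself a Gaussian matrix with i.i.d.\ $\mathcal{N}(0,\sigma_k^2)$ entries. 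So the problem immediately reduces to bounding operator norms of rectangular Gaussian ensembles of varying shape.

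The workhorse is the Davidson--Szarek / Gordon inequality: for a $p \times q$ matrix $G$ with i.i.d.\ $\mathcal{N}(0,\sigma^2)$ entries,
\[
\Pr\!\left[\|G\|_2 \geq \sigma\bigl(\sqrt{p} + \sqrt{q} + t\bigr)\right] \leq e^{-t^2/2}.
\]
I would apply this to each $(d_k - s_k) \times (d_{k-1}-s_{k-1})$ submatrix of $\Delta_k$ and union-bound over the $\binom{d_k}{s_k}\binom{d_{k-1}}{s_{k-1}}$ choices of index sets, choosing $t$ to be exactly the square-root term inside $\alpha^k(s_k,s_{k-1},\delta/(K+1))$. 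This yields
\[
\|\Delta_k\|_{(s_k,s_{k-1})} \leq \sigma_k \cdot \alpha^k(s_k,s_{k-1},\delta/(K+1))
\]
with the stated per-layer failure probability. The same argument, applied to the row-wise family of submatrices (corresponding to sparsity $(d_k-1,s_{k-1})$), gives the analogous bound $\|\Delta_k\|_{(d_k-1,s_{k-1})} \leq \sigma_k \cdot \alpha^k(d_k-1,s_{k-1},\delta/(K+1))$.

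Substituting the prescribed $\sigma_k$, which is the minimum of $\epsilon_k \xi_k\sqrt{1+\eta_k}/\alpha^k(s_k,s_{k-1},\delta/(K+1))$ and $\epsilon_k \xi_k/\alpha^k(d_k-1,s_{k-1},\delta/(K+1))$, makes the two high-probability bounds collapse to $\epsilon_k \xi_k\sqrt{1+\eta_k}$ and $\epsilon_k \xi_k$ respectively. After dividing by the corresponding normalizers, both arguments of the $\max$ defining $\mathcal{B}(h,\bm{\epsilon})$ at layer $k$ are controlled by $\epsilon_k$. The last layer is handled identically using the $(C-1, s_K)$ sparsity pattern. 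A final union bound over the $K+1$ layers then assembles the $(1-\delta)$ guarantee.

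The main obstacle will be the bookkeeping of the union-bound constants: both sparse norms at a given layer are functionals of the same random matrix, so one must confirm that a single choice of $\sigma_k$ — obtained by taking the minimum of the two candidate values — simultaneously controls both sparse norms within the budget $\delta/(K+1)$ per layer, rather than incurring an extra factor of $2$. If a slacker accounting is required, it is absorbed by replacing $\log((K+1)/\delta)$ with $\log(2(K+1)/\delta)$ inside $\alpha^k$, which preserves the stated form of the result up to a harmless constant inside the logarithm. Beyond this, the argument is mechanical once the Davidson--Szarek bound and the submatrix reduction are in place.
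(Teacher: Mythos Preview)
Your proposal is correct and follows essentially the same route as the paper: apply the Davidson--Szarek concentration bound to each fixed submatrix, union-bound over the $\binom{d_k}{s_k}\binom{d_{k-1}}{s_{k-1}}$ index-set choices to control the sparse norm (this is exactly the paper's auxiliary lemma on concentration of the sparse norm), do this for both the $(s_k,s_{k-1})$ and $(d_k-1,s_{k-1})$ patterns at each layer, substitute the prescribed $\sigma_k$, and union-bound over layers. The bookkeeping issue you flag about the factor of $2$ from two norm conditions per layer is real and is glossed over in the paper's short proof as well; your suggested fix of absorbing it into the logarithm is the standard remedy.
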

\begin{proof}
 As per \Cref{app: lemma: sparse-norm}, with probability at least $(1-\delta)$, a perturbed network $\hat{h}$ sampled from $\mathcal{N}(h, \sigma^2(\delta))$ satisfies the following inequalities simultaneously at every layer, 
\begin{align*}
    \norm{\What_k-\W_k}_{(s_k,s_{k-1})} 
\leq \sigma^k(\delta) \cdot \alpha^k(s_k,s_{k-1}, \delta), \quad   
    \norm{\What_k-\W_k}_{(d_k-1,s_{k-1})} \leq  \sigma^k(\delta) \cdot \alpha^k(d_k-1,s_{k-1},\delta).
\end{align*}
Clearly by the choice of variance parameter this implies that $\hat{h} \in \mathcal{B}(h, \bm{\epsilon})$ with probabilty at least $(1-\delta)$ since, 
\begin{align*}
\max\left\{  \frac{\norm{\What_k-\W_k}_{(s_k,s_{k-1})}}{\xi_k \sqrt{1+\eta_k}} , 
\frac{ \norm{\What_k-\W_k}_{(d_k-1,s_{k-1})} }{\xi_k}
\right\}
\leq \epsilon_k.
\end{align*}
\end{proof}

\subsection{Sparsity-aware generalization theory: Expanded Result}\label{app: sparsity-aware}
In this section we prove a stronger version of the simplified result in \Cref{theorem: sparsity-aware-single}. 
\begin{theorem}\label{theorem: sparsity-aware-expanded}
Let $\mathcal P := \prod_{k=1}^K \mathcal{P}_k$ 
be any (factored) prior distribution over depth-$(K+1)$ feed-forward network 
chosen independently of data. 
Let $h \in \cH$ be any feed-forward network (possibly trained on sample data).  
With probability at least $(1-\delta)$ over the choice of i.i.d training sample $\samp_T$ of size $m$, 
the generalization error of $h$ is bounded as follows,
\begin{align*}
R_{0}(h) 
&\leq  
\hat{R}_{4 \zeta_{K+1} \cdot \gamma_{K+1}}(h)  + \frac{4 (K+1)}{\sqrt{m}-1} 
+  \sqrt{\frac{4\mathrm{KL}\left(  
  \mathcal{N}\left(h, \bm{\sigma}^2_{\mathrm{sparse}}\right)
 \;|| \;
 P
 \right) + 2\log\left(\frac{2m(K+1)}{\delta}\right) 
}{m-1}}
\\
&
 + \sum_{k \in [K]}\frac{1}{m}\sum_{(\x^{(i)},\y^{(i)}) \in \samp_T}  \mathbf{1}\Bigg\{ \exists 1\leq n\leq 
 k,\;  
r_{n}(h, \x^{(i)}) < \gamma_n
\Bigg
\} \\
 & + \sum_{k \in [K]} \frac{1}{m}\sum_{(\x^{(i)},\y^{(i)}) \in \samp_T} \cdot \mathbf{1}\Bigg\{\exists 1\leq n\leq 
 k,\;  
r_{n}(h, \x^{(i)}) < 3 \gamma_n 
\Bigg
\} \\
& 
 + \sum_{k \in [K]} \sqrt{\frac{4 \cdot \sum_{n=1}^k \mathrm{KL}\left(  
  \mathcal{N}\left(\W_n, \sigma^2_{n}\right)
 \;|| \;
 \mathcal{P}_n
 \right) + 2\log\left(\frac{2m(K+1)}{\delta}\right) 
}{m-1}}
\end{align*}
with 
layer-wise variance parameter $\bm{\sigma}_{\mathrm{sparse}} = \{\sigma_{1}, \ldots, \sigma_{K}\}$ 
is defined as,
\begin{align}\label{eq: sigma-posterior}
\sigma_k := \epsilon^k \min &\left\{ 
\frac{ 
\xi^k \sqrt{1+\eta^k}
}{
\alpha^k(s_k, s_{k-1}, \frac{1}{(K+1)\sqrt{m}}) 
 }   
 ,
 \frac{
 \xi^k 
 }{\alpha^k(d_k-1, s_{k-1}, \frac{1}{(K+1)\sqrt{m}) }}  \right\}. 
\end{align}
\end{theorem}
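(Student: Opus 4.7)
The plan is to combine the derandomized PAC-Bayes result of \cite{nagarajan2018deterministic} (cited via \Cref{app: subsec: derandom}) with the sensitivity results of \Cref{proposition: reduced-size-multi,proposition: reduced-sensitivity} and the Gaussian concentration bound of \Cref{lemma: random-perturb-network}. Specifically, I would pick the posterior $Q = \mathcal{N}(h, \bm{\sigma}^2_{\mathrm{sparse}})$ with variance precisely calibrated (via \Cref{lemma: random-perturb-network}) so that a single random draw $\hat{h}\sim Q$ lands in the local neighbourhood $\mathcal{B}(h,\bm{\epsilon})$ with probability at least $1-\tfrac{1}{(K+1)\sqrt{m}}$. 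Invoking the prior $P$ (which factorises by assumption) yields a PAC-Bayes-style inequality of the form $R_0(h) \le \hat R_{\mathrm{shifted}}(h) + \sqrt{\mathrm{KL}(Q\|P)/m} + \tilde O(1/\sqrt{m})$, where the hard work is to bound $\hat R_{\mathrm{shifted}}$ in terms of the margin loss of $h$ and of the sparse loss.

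For the derandomisation step I would reason sample-by-sample. Fix $\x$ and suppose that at every layer the sparse local radius satisfies $r_k(h,\x) \ge 3\gamma_k$. By part~3 of \Cref{lemma: slr} (stability of the sparse local radius), a perturbation with relative size at most $\epsilon_k$ can shrink $r_k$ by at most $\gamma_k$, and the subsequent propagation through deeper layers (where layer-$k$ representations are themselves perturbed) contributes one further $\gamma_k$; thus the threshold $3\gamma_k$ is what guarantees $r_k(\hat h,\x)\ge \gamma_k$ for \emph{any} $\hat h \in \mathcal{B}(h,\bm{\epsilon})$. Applying \Cref{proposition: reduced-sensitivity} to this $\hat h$ then gives $\|\hat h(\x)-h(\x)\|_\infty \le \zeta_{K+1}\gamma_{K+1}$. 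Combined with the margin-stability inequality $|\rho(\hat h,\z)-\rho(h,\z)|\le 2\|\hat h(\x)-h(\x)\|_\infty$ from \cite[Lemma A.3]{Bartlett2017SpectrallynormalizedMB}, the margin at the perturbed predictor differs from that at $h$ by at most $2\zeta_{K+1}\gamma_{K+1}$. Hence the $\ell_0$ loss of $\hat h$ at $\z$ is upper-bounded by the $4\zeta_{K+1}\gamma_{K+1}$-margin loss of $h$, explaining the empirical risk term in the bound. For samples violating the sparse-radius condition, we simply pay their indicator, which is precisely the sparse-loss term.

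To recover the layer-indexed summations in the statement, I would apply the derandomised PAC-Bayes bound \emph{iteratively}, once for each depth $k \in [K]$, using the factored prior $P = \prod_k \mathcal{P}_k$ so that $\mathrm{KL}(\prod_{n\le k}Q_n \,\|\, \prod_{n\le k} \mathcal{P}_n)=\sum_{n\le k}\mathrm{KL}(Q_n\|\mathcal{P}_n)$. At the $k$-th application we control the probability of catastrophic failure in layer $k+1$ by the corresponding indicator-sum over the training set, which bookkeeps the samples whose radii at layer $k+1$ dip below $3\gamma_{k+1}$; summing these failure costs across $k$ yields the two $\sum_k$ sparse-loss terms (one for establishing that the inactive index sets are correct at the perturbed network, one for controlling the radii themselves in \Cref{lemma: slr} part~3). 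Finally, a union bound over the $K+1$ layers and the $m$-indexed PAC-Bayes slack (replacing $\delta$ by $\delta/(2m(K+1))$) produces the $\log(2m(K+1)/\delta)$ factor.

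The main obstacle, and the place where the proof requires genuine care rather than bookkeeping, is the coupling between (i) the high-probability event $\{\hat h \in \mathcal{B}(h,\bm{\epsilon})\}$ from \Cref{lemma: random-perturb-network}, (ii) the \emph{sample-dependent} event $\{r_k(h,\x^{(i)})\ge 3\gamma_k\ \forall k\}$, and (iii) the derandomisation argument, which needs the conditional loss under $Q$ to dominate the zero-one loss of $h$ simultaneously for \emph{all} training points and for the test point. Getting this right requires being explicit that the sparse-radius threshold must swallow both the perturbation from $h$ to $\hat h$ \emph{and} the intra-network propagation of that perturbation across deeper layers, which is exactly why the constant in front of $\gamma_k$ is $3$ rather than $2$, and why the margin threshold is $4\zeta_{K+1}\gamma_{K+1}$ (two factors of two: one from the margin-stability lemma, one from allowing a non-degenerate slack in the indicator). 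Once this coupling is in place, the remaining pieces — the variance formula \eqref{eq: sigma-posterior}, the effective width $d_{\mathrm{eff}}$, and the $4(K+1)/(\sqrt{m}-1)$ slack — follow by substituting \Cref{lemma: random-perturb-network} into the PAC-Bayes bound and using standard simplifications of $\alpha^k$ via $\log\binom{d}{s}\le (d-s)\log d$.
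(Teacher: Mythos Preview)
Your high-level plan matches the paper's: Gaussian posterior calibrated via \Cref{lemma: random-perturb-network}, sensitivity controlled by \Cref{proposition: reduced-sensitivity}, and an iterative application of the derandomized PAC-Bayes lemma across depth. But the mechanics you give for the constants $3\gamma_k$ and for the two sparse-loss sums are misattributed, and following your stated reasoning would not actually produce the bound.

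Specifically, the first application of the derandomization lemma (to the margin $\rho$ with threshold $4\zeta_{K+1}\gamma_{K+1}$) requires only $r_k(h,\x)\ge \gamma_k$, not $3\gamma_k$: \Cref{proposition: reduced-sensitivity} already yields $\|\hat h(\x)-h(\x)\|_\infty\le \zeta_{K+1}\gamma_{K+1}$ under that weaker hypothesis, and there is no need to first argue that $r_k(\hat h,\x)\ge \gamma_k$. What remains after this step is the \emph{population} term $\mu_{\cD}\le \prob_{\z\sim\cD}[\exists k:\, r_k(h,\x)<\gamma_k]$, which cannot be read off from the sample. The recursion exists precisely to convert this population quantity into an empirical one: one re-applies the derandomization lemma to the properties $\{r_n(h,\cdot)-\gamma_n\}_{n\le k}$ with thresholds $\{2\gamma_n\}$, whose \emph{empirical} side is the indicator $\mathbf 1\{\exists n\le k:\, r_n-\gamma_n<2\gamma_n\}=\mathbf 1\{\exists n\le k:\, r_n<3\gamma_n\}$. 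That is the origin of the $3\gamma_k$. The noise-resilience needed for this second derandomization is $|r_n(\hat h,\x)-r_n(h,\x)|\le \gamma_n$, which part~3 of \Cref{lemma: slr} (combined with the layer-output perturbation bound) supplies whenever $r_j(h,\x)\ge \gamma_j$ for $j\le n-1$; this reduces the residual $\mu_\cD$ to $\prob_\cD[\exists n\le k-1:\, r_n<\gamma_n]$ and the recursion peels one layer. Consequently the two sparse-loss sums in the statement are, respectively, the accumulated $\mu_{\samp_T}$ terms (thresholded at $\gamma_n$) and the accumulated empirical-margin terms (thresholded at $3\gamma_n$) from the $K$ recursive calls, not ``one for inactive index sets at $\hat h$, one for the radii'' as you write. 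A minor additional slip: the Gaussian concentration gives $\hat h\in\mathcal B(h,\bm\epsilon)$ with probability at least $1-\tfrac{1}{\sqrt m}$ after the union bound over $K{+}1$ layers, not $1-\tfrac{1}{(K+1)\sqrt m}$.
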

\begin{proof}
We note that $R_\gamma (h) = \expect_{\z \sim \cD} (\ell_\gamma (h, \z)) = \expect_{\z \sim \cD} \big[ \mathbf{1}\left\{ \rho(h,\z) < \gamma \right\}\big]$. For the margin property $\rho(h,\z) := h(\x)_y - \max_{j \neq y} h(\x)_j$ with margin threshold $\gamma$,  
\Cref{lemma: derandomization} shows that with probability at  least $(1-\frac{\delta}{K+1})$ over the choice of i.i.d training sample $\samp_T$ of size $m$, 
for any predictor $h \in \cH$,
the generalization error is bounded by
\begin{align}
&\prob_{\z \sim \cD} \bigg[ \rho(h, \z) < 0 \bigg] \\
\nonumber 
& \leq \frac{1}{m} \sum_{\z^{(i)} \in \samp_T} \mathbf{1}\bigg[ \rho(h, \z^{(i)}) < 4 \zeta_{K+1} \gamma_{K+1} \bigg] + \frac{2}{\sqrt{m}-1} + + \sqrt{\frac{4\mathrm{KL}\left(  
  \mathcal{N}\left(h, \bm{\sigma}^2_{\mathrm{sparse}}\right)
 \;|| \;
 \mathcal{P}
 \right) + 2\log(\frac{2m(K+1)}{\delta}) 
}{2(m-1)}} . 
\\
&
+ \mu_{\samp_T} \left(h, (\rho , 4\zeta_{K+1} \gamma_{K+1})\right)
+ \mu_\cD \left(h, (\rho , 4\zeta_{K+1} \gamma_{K+1})\right)
\end{align}
It remains to bound the term $\mu_\cD \left(h, (\rho , 4 \zeta_{K+1} \gamma_{K+1})\right)$. From \cite{Bartlett2017SpectrallynormalizedMB}, the margin $\rho(\cdot, \cdot)$ is 2-Lipschitz w.r.t network outputs, 
\[
|\rho(\hat{h}, \z) - \rho(h, \z)| \leq 2 \norm{\hat{h}(\x) - h(\x)}_{\infty}.
\]
Hence we can reduce the noise-resilience over the margin to the event that variation in networks outputs is bounded, 
\begin{align*}
&\prob_{\hat{h} \sim  \mathcal{N}\left(h, \bm{\sigma}^2_{\mathrm{sparse}}\right)
}
\big[ 
 |\rho(\hat{h},\z) - \rho(h, \vc{z})| > 2\zeta_{K+1} \gamma_{K+1} 
\big] \\
& \qquad \qquad \qquad \leq 
\prob_{\hat{h} \sim  \mathcal{N}\left(h, \bm{\sigma}^2_{\mathrm{sparse}}\right)
} 
\big[ 
 \norm{\hat{h}(\x) - h(\x)}_{\infty} > \zeta_{K+1} \gamma_{K+1} 
\big] 
\\
\therefore
&\prob_{\hat{h} \sim  \mathcal{N}\left(h, \bm{\sigma}^2_{\mathrm{sparse}}\right)
} 
\big[ 
 \norm{\hat{h}(\x) - h(\x)}_{\infty} > \zeta_{K+1} \gamma_{K+1} 
\big] \leq \frac{1}{\sqrt{m}}\\
&\qquad \qquad \qquad \implies h \text{ is noise-resilient w.r.t } \rho \text{ at } \z. 
\end{align*}
Therefore $h$ is not noise-resilient w.r.t  $\rho $ at $\z =(\x_0, y)$ implies 
\[
\prob_{\hat{h} \sim  \mathcal{N}\left(h, \bm{\sigma}^2_{\mathrm{sparse}}\right)
} 
\big[ 
 \norm{\hat{h}(\x) - h(\x)}_{\infty} >\zeta_{K+1} \gamma_{K+1} 
\big] > \frac{1}{\sqrt{m}}
\]
Thus the probability (over inputs) that a predictor is not noise-resilient can be bounded using 
the event that the change in network output is large, 
\begin{align}\label{eq:bound-mu}
\nonumber \mu_\cD \left(h, (\rho , \zeta_{K+1} \gamma_{K+1}\right) 
& =  
\prob_{\z \sim \cD} 
\bigg[
\prob_{\hat{h} \sim  \mathcal{N}\left(h, \bm{\sigma}^2_{\mathrm{sparse}}\right)
} 
\big[ 
|\rho(\hat{h},\z) - \rho(h, \vc{z})| > 2\zeta_{K+1} \gamma_{K+1} 
\big] 
> \frac{1}{\sqrt{m}}.
\bigg]\\
\nonumber & \leq \prob_{\z \sim \cD} 
\bigg[
\prob_{\hat{h} \sim  \mathcal{N}\left(h, \bm{\sigma}^2_{\mathrm{sparse}}\right)
} 
\big[ 
\norm{\hat{h}(\x) - h(\x)}_{\infty} > \zeta_{K+1} \gamma_{K+1} 
\big] 
> \frac{1}{\sqrt{m}}.
\bigg]\\
& \leq \prob_{\z \sim \cD} 
\bigg[
\prob_{\hat{h} \sim  \mathcal{N}\left(h, \bm{\sigma}^2_{\mathrm{sparse}}\right)
} 
\big[ 
\norm{\hat{h}(\x) - h(\x)}_{\infty} \leq \zeta_{K+1} \gamma_{K+1} 
\big] 
< 1- \frac{1}{\sqrt{m}}.
\bigg]
\end{align}
We now make two observations that together helps us upper bound the above probability,
\begin{enumerate}
    \item  From \Cref{proposition: reduced-sensitivity}, if the layer-wise sparse local radius at each layer $k$ is sufficiently large, 
\begin{align*}
\forall~1 \leq k \leq K,~~ 
r_k(h, \x_0) \geq \gamma_k 
\end{align*}
then for any 
perturbed network $\hat{h}$ is in $\mathcal{B}(h, \bm{\epsilon})$ and the distance between the network-output, 
\begin{align*}
\norm{\hat{h}(\x_0) - h(\x_0)}_{\infty} 
&\leq \zeta_{K+1} \cdot \gamma_{K+1} 
\end{align*}
\item The choice of variance in the theorem statement, $\bm{\sigma}^2_{\mathrm{sparse}} = \bm{\sigma^2}(\frac{1}{\sqrt{m}})$, the variance described in  
\Cref{lemma: random-perturb-network} for $\delta = \frac{1}{\sqrt{m}}$. Thus by \Cref{lemma: random-perturb-network}, 
with probability at least $(1-\frac{1}{\sqrt{m}})$ a randomly perturbed network $\hat{h} \sim \mathcal{N}\left(h, \bm{\sigma}^2\right)$ is within the neighbourhood $\mathcal{B}(h, \bm{\epsilon})$. 
\end{enumerate}
We can combine the above two observations to infer that at any input $\vc{z}=(\x,y) \sim \cD$, 
\begin{align*}
   &  \forall~1 \leq k \leq K,~~ 
r_k(h, \x_0) \geq \gamma_k \\ 
 & \qquad \implies   
\prob_{\hat{h} \sim  \mathcal{N}\left(h, \bm{\sigma}^2_{\mathrm{sparse}}\right)
} 
\big[ 
\norm{\hat{h}(\x) - h(\x)}_{\infty} \leq \zeta_{K+1} \gamma_{K+1} 
\big] 
\geq 1- \frac{1}{\sqrt{m}}.
\end{align*}
Thus to ensure that the probability that $\norm{\hat{h}(\x)-h(\x)}_{\infty} \leq \zeta_{K+1}\gamma_{K+1}$ is less than 
$1-\frac{1}{\sqrt{m}}$, one necessarily needs that the sparse local radius is insufficient at some layer $k$, 
\begin{align*}
&\prob_{\hat{h} \sim  \mathcal{N}\left(h, \bm{\sigma}^2_{\mathrm{sparse}}\right)
} 
\big[ 
\norm{\hat{h}(\x) - h(\x)}_{\infty} \leq \zeta_{K+1} \gamma_{K+1} 
\bigg] 
< 1- \frac{1}{\sqrt{m}} \\
&\qquad \qquad \implies  
 \exists~1 \leq k \leq K,~~ 
r_k(h, \x_0) < \gamma_k
\end{align*}
Plugging the above logic into Eq. \eqref{eq:bound-mu} we get a condition on the sparse local radius,
\begin{align*}
\mu_\cD \left(h, (\rho , 4\zeta_{K+1} \gamma_{K+1}\right) 
& =  
\prob_{\z \sim \cD} 
\bigg[
\prob_{\hat{h} \sim  \mathcal{N}\left(h, \bm{\sigma}^2_{\mathrm{sparse}}\right)
} 
\big[ 
|\rho(\hat{h},\z) - \rho(h, \vc{z})| > 2\zeta_{K+1} \gamma_{K+1} 
\big] 
> \frac{1}{\sqrt{m}}.
\bigg]\\
& \leq \prob_{\z \sim \cD} 
\bigg[
\prob_{\hat{h} \sim  \mathcal{N}\left(h, \bm{\sigma}^2_{\mathrm{sparse}}\right)
} 
\big[ 
\norm{\hat{h}(\x) - h(\x)}_{\infty} > \zeta_{K+1} \gamma_{K+1}
\big] 
> \frac{1}{\sqrt{m}}.
\bigg] \\ 
& \leq 
\prob_{\z \sim \cD} \left(\exists\; 1\leq k \leq K : \; r_k(h, \x_0) < \gamma_k 
 \right)
\end{align*}
Similarly we can reduce the noise-resilience condition on training sample, 
\begin{align*}
\mu_{\samp_T} \left(h, (\rho , 4 \zeta_{K+1} \gamma_{K+1}\right) 
& =  
\prob_{\z \sim \mathfrak{U}(\samp_T)} 
\bigg[
\prob_{\hat{h} \sim  \mathcal{N}\left(h, \bm{\sigma}^2_{\mathrm{sparse}}\right)
} 
\big[ 
|\rho(\hat{h},\z) - \rho(h, \vc{z})| > 2\zeta_{K+1} \gamma_{K+1}
\big] 
> \frac{1}{\sqrt{m}}.
\bigg]\\
& \leq 
\frac{1}{m} \sum_{
\z^{(i)} \in \samp_T
}
\mathbf{1}\left\{\exists \; 1\leq k \leq K: \; r_k(h, \x_0) < \gamma_k 
 \right\}
\end{align*}
To summarize we have the following generalization bound that holds with probability at least $(1-\frac{\delta}{K+1})$, 
\begin{align*}
&\prob_{\z \sim \cD} \bigg[ \rho(h, \z) < 0 \bigg] 
\\ \nonumber & \qquad  
\leq \frac{1}{m} \sum_{\z^{(i)} \in \samp_T} \mathbf{1}\bigg[ \rho(h, \z^{(i)}) < 4 \zeta_{K+1} \gamma_{K+1} \bigg] + \frac{2}{\sqrt{m}-1} 
\\
&\qquad   + 
\frac{1}{m} \sum_{
\z^{(i)} \in \samp_T
}
\mathbf{1}\left\{\exists \; 1\leq k \leq K: \; r_k(h, \x_0) < \gamma_k 
 \right\}
 \\
 &\qquad  +
 \prob_{\z \sim \cD} \left(\exists \; 1\leq k \leq K: \; r_k(h, \x_0) < \gamma_k 
 \right)\\
\nonumber 
&\qquad   + \sqrt{\frac{4\mathrm{KL}\left(  
  \mathcal{N}\left(h, \bm{\sigma}^2_{\mathrm{sparse}}\right)
 \;|| \; 
 \mathcal{P}
 \right) + 2\log\left(\frac{(K+1)m}{\delta}\right) 
}{m-1}} . 
\end{align*}
We still need to bound the probability that the sparse local radii aren't sufficiently large, 
\[
 \prob_{\z \sim \cD} \left(\exists \; 1\leq k \leq K: \; r_k(h, \x_0) < 
 \gamma_k 
 \right).
\]
Consider the set of properties $\{r_k(h, \x_0) - \gamma_k\}_{k=1}^{K}$
and margin thresholds $\{2\gamma_k\}_{k=1}^K$. 
\Cref{lemma: derandomization} shows that with probability at  least $(1-\frac{\delta}{(K+1)})$ over the choice of i.i.d training sample $\samp_T$ of size $m$, 
the generalization error is bounded by
\begin{align}
& \prob_{\z \sim \cD} \left(\exists \; 1\leq k \leq K: \; r_k(h, \z) < \gamma_k \right). \\
\nonumber 
& \leq \frac{1}{m} \sum_{\z^{(i)} \in \samp_T} \mathbf{1}\left\{
 \exists \; 1\leq k \leq K: \; 
 r_k(h, \z^{(i)}) \leq 3\gamma^k
 \right\} + \frac{2}{\sqrt{m}-1} \\
 &
+ \mu_{\samp_T} \left(h, \left\{
r_k-\gamma_k, 2\gamma^k
\right\}_{k=1}^K\right)
+ \mu_{\cD} \left(h, \left\{
r_k-\gamma_k, 2\gamma^k
\right\}_{k=1}^K\right)\\
\nonumber 
& + \sqrt{\frac{4\cdot \sum_{k=1}^K \mathrm{KL}\left(  
  \mathcal{N}\left(\W_k, \sigma^2_k\right)
 \;|| \;
 \mathcal{P}_k
 \right) + \log(\frac{2m(K+1)}{\delta}) 
}{m-1}} . 
\end{align}
To bound $\mu_{\cD} \left(h, \left\{
r_k-\gamma_k, 2\gamma^k
\right\}_{k=1}^K\right)$, we can instantiate a recursive procedure. 
By the choice of variance definition and \Cref{lemma: slr}, we note that at any input $\z$, for all $2\leq k \leq K$, 
\begin{align*} 
&\forall \; 1\leq n \leq k-1: \; r_n(h,\z) \geq \gamma_k\\
&\implies \text{ w. p. at least } (1-\frac{1}{\sqrt{m}}),\;
 \forall\; 2 \leq n \leq k,~~
 \left\lvert r_n(\hat{h},\z) - r_{n}(h, \z)\right\rvert \leq \gamma_n\\ 
 &\implies h \text{ is noise-resilient at } z \text{ w.r.t  properties }\{r_n - \gamma_n\}_{n=1}^{k} \text{at thresholds} \{2\gamma_n\}_{n=1}^{k}.
\end{align*}
In the above, we have also used the fact that the sparse local radius in the first layer is always noise-resilient at the specified $\gamma_1$ and choice of variance.
Hence, we have the following inequality for all $2\leq k \leq K$,
\begin{align*}
\mu_{\cD} \left(h, \left\{
r_n - \gamma_n, 2\gamma_n
\right\}_{n=1}^{k}\right) 
&\leq \prob_{\z \sim \cD} \left(\exists \; 1\leq n \leq k-1: \; r_{n}(h, \z) < \gamma_n\right) \\
\mu_{\samp_T} \left(h, \left\{
r_n - \gamma_n, 2\gamma_n
\right\}_{n=1}^{k}\right) 
&\leq \frac{1}{m} \sum_{\z^{(i)} \in \samp_T}\mathbf{1}\left(\exists \; 1\leq n \leq k-1: \; r_{n}(h, \z^{(i)}) < \gamma_n\right)
\end{align*}

We can now use this recursively bound the probabilty that the sparse local radii aren't sufficiently large, starting from from $K$,
\begin{align*}
& \prob_{\z \sim \cD} \left(\exists \; 1\leq n \leq k: \; r_n(h, \z) < \gamma_n \right). \\
& \leq \frac{1}{m} \sum_{\z^{(i)} \in \samp_T} \mathbf{1}\left\{
 \exists \; 1\leq n \leq k: \; 
 r_n(h, \z^{(i)}) \leq 3\gamma_n
 \right\} + \frac{2}{\sqrt{m}-1} \\
 &
+ \frac{1}{m} \sum_{\z^{(i)} \in \samp_T}\mathbf{1}\left\{\exists \; 1\leq n \leq k-1: \; r_{n}(h, \z^{(i)}) < \gamma_n\right\}\\
&+  \prob_{\z \sim \cD} \left(\exists \; 1\leq n \leq k-1: \; \rho^{n}(h, \z) < \gamma_n\right) \\ 
&  \sqrt{\frac{4\cdot \sum_{n=1}^{k} \mathrm{KL}\left(  
  \mathcal{N}\left(\W_n, \sigma^2_n\right)
 \;|| \;
 \mathcal{P}
 \right) + \log(\frac{2m(K+1)}{\delta}) 
}{2(m-1)}} . 
\end{align*}
The conclusion follows by plugging in these bounds recursively. 
\end{proof}
To prove \Cref{theorem: sparsity-aware-single} we note that the variance is strictly lesser than the variance in \Cref{theorem: sparsity-aware-expanded} and that the loss terms have been collapsed into the worst-case over layers resulting in a worse generalization bound.

\section{Hyper-parameter search}
\label{app:hyper-search}
We search for good base hyper-parameters $(\vc{s}, \bm{\xi}, \bm{\eta}, \bm{\epsilon})$ as described in \Cref{sec:HyperParameterSearch}. We base our search on the stronger bound in \Cref{theorem: sparsity-aware-expanded} rather than the simplified result in \Cref{theorem: sparsity-aware-single}. 
For any choice of sensitivity vector $\bm{\epsilon}$ and sparse risk control $\alpha$, we choose the sparsity vector to ensure that, 
\[
\sum_{k\in [K]} \frac{1}{m} \sum_{z^{(i)} \in \samp_T} \underbrace{\mathbf{1}\left\{\;\exists\; 1\leq n\leq k\;:~~ r_n(h,\z^{(i)}) \leq 3\gamma_n \right\}}_{=: \ell_{\vc{s}}(h, \z^{(i)}) } \leq \alpha
\]
Doing so automatically controls the other relaxed sparse loss term in \Cref{theorem: sparsity-aware-expanded}, 
\[
\sum_{k\in [K]} \frac{1}{m} \sum_{z^{(i)} \in \samp_T} \mathbf{1}\left\{\;\exists\; 1\leq n\leq k\;:~~ r_n(h,\z^{(i)}) \leq \gamma_n \right\}\] 
For each $(\bm{\epsilon}, \alpha)$ and input $\x^{(i)}$, the sparsity vector vector $s^*(\x^{(i)}, \bm{\epsilon}) = \{s^{(i)}_1, \ldots, s^{(i)}_K\}$ is decided in layer-wise fashion. 
At each layer $k$, having fixed the sparsity levels $\{s^{(i)}_1, \ldots, s^{(i)}_{k-1}\}$\footnote{We fix $s^{(i)}_0 = s^{(i)}_{K+1}=0$ for all $i$.}, one can fix the next sparsity level as, 
\begin{align*}
s^{(i)}_k &:= \max_{s \in [d_k]}~ s \quad \text{such that } r_k(h,\z^{(i)}) > 3\gamma_n. \\
\Leftrightarrow 
s^{(i)}_k &:= \max_{s \in [d_k]}~ s \quad \text{such that } \act{
\mathrm{sort}\left(-\left[ \frac{  \Wvec{k}{j} \cdot \vc{x}^{(i)}_{k-1} }{
\xi_k \cdot \zeta_{k-1}
}  \right]_{j=1}^{d_k},\; s\right) 
} > 3\left(-1+\prod_{n=1}^k (1+\epsilon_k)\right)
\end{align*}
where for each feasible $s$, $\xi_k$ is the closest bound to the relevant sparse norm $\norm{\W_k}_{(d_k-1, s^{(i)}_{k-1})}$ and $\zeta_{k-1}$ is the bound on the scale of the layer input $\x^{(i)}_{k-1}$ dependent on the previously fixed sparsity levels, i.e.
$\zeta_{k-1} := \mathsf{M}_{\cX_0}\prod_{n=1}^{k-1} \xi_n \sqrt{1+\eta_n}$ where for each $1\leq n\leq k-1$, $\norm{\W_{n}}_{(d_{n}-1, s^{(i)}_{n-1})} \leq \xi_n$ and $\mu_{(s^{(i)}_{n}, s^{(i)}_{n-1})}(\W_n) \leq \eta_n$.  
Under the choice of the $s^{*}(\vc{x}^{(i)}, \bm{\epsilon})$, the sparse loss $\ell_{s^{*}(\vc{x}^{(i)}, \bm{\epsilon})}(h, \z^{(i)}) = 0$. 
Further under the choice of the sample-wide minimum sparsity vector $\bar{s}(\bm{\epsilon})$, based on  $s^{*}(\vc{x}^{(i)}, \bm{\epsilon})$, i.e.  $\bar{s}_k := \min_{i \in [m]} s^{(i)}_k$, the average sparse loss $\ell_{\bm{\vc{s}}}$ is zero. 

To control the sparse loss, it is sufficent to consider the quantiles of the distribution of $s^{*}(\x, \bm{\epsilon})$ by the training samples. 
We denote by $\hat{s}(\bm{\epsilon}, \alpha)$ with layer-wise sparsity levels  
\[
\hat{s}_k:= \texttt{quantile}\left( \{s^{*}(\x^{(i)}, \bm{\epsilon})\}_{i=1}^m  , \frac{2\alpha}{K(K-1)}\right).
\]
Under such a choice, 
\[
\frac{1}{m} \sum_{z^{(i)} \in \samp_T} ~~ \mathbf{1}\left\{r_k(h,\z^{(i)}) \leq 3\gamma_k \right\} \leq \frac{2\alpha}{K(K-1)}
\]
Hence we can see that, 
\begin{align*}
&\sum_{k\in [K]} \frac{1}{m} \sum_{z^{(i)} \in \samp_T} \mathbf{1}\left\{\;\exists\; 1\leq n\leq k\;:~~ r_n(h,\z^{(i)}) \leq 3\gamma_n \right\}\\
&\leq 
\frac{1}{m} \sum_{z^{(i)} \in \samp_T} 
\sum_{k \in [K]} 
\sum_{n \in [k-1]}~~ \mathbf{1}\left\{r_n(h,\z^{(i)}) \leq 3\gamma_n \right\} \\
&\leq \frac{1}{m} \sum_{z^{(i)} \in \samp_T} 
\sum_{k \in [K]} 
\sum_{n \in [k-1]}~~ \frac{2\alpha}{K(K-1)} \\
&\leq \frac{1}{m} \sum_{z^{(i)} \in \samp_T}  \alpha \leq \alpha.
\end{align*}
Thus we have seen how to control the average sparse loss $\ell_{\vc{s}}$ for a fixed sensitivity vector $\bm{\epsilon}$ and control threshold $\alpha$. 
As a final simplification, we note that by the nature of the sensitivity analysis, it is more important that the sparse local radius at lower layers is sufficiently large as compared to later layers (for eg, the last layer only factors into one of the loss terms). Hence in our experiments, we let $\epsilon_k = \frac{\bar{\epsilon}}{K+1-k}$ at all layers for some fixed $\bar{\epsilon} \in [0,1]$. 
We now search for the best-in-grid generalization bound in the search space $[0,1]\times [0,1]$ to find the find the best-in-grid choice of hyper-parameters $(\bar{\epsilon}, \alpha)$. 

\section{PAC-Bayes Tools}
In this section, we discuss results from PAC-Bayesian analysis old and new. For the sake of completeness, we first state the classical PAC-Bayes generalization theorem from \cite{McAllester1998SomePT, ShalevShwartz2014UnderstandingML}. Unlike Rademacher analysis, PAC-Bayes provides generalization bounds on a stochastic network. We then quote a useful de-randomization argument from \cite{nagarajan2018deterministic} that provides generalization bounds on the mean network. 

\subsection{Standard PAC-Bayes theorems}
\begin{theorem}(\cite{McAllester1998SomePT, ShalevShwartz2014UnderstandingML})\label{app: theorem: PAC-Bayes}
Let $\cD$ be an arbitrary distribution over data $\cZ = \cX \times \cY$. Let $\cH$ be a hypothesis class and let $\ell : \cH \times \cZ \rightarrow [0,1]$ be a loss function. Let $\mathcal{P}$ be a prior distribution over $\mathcal{H}$ and let $\delta \in (0,1)$. 
Let $\samp = \{\vc{z}_1, \ldots, \vc{z}_m\}$ be a set of i.i.d training samples from $\cD$. 
Then, with probability of at least $(1-\delta)$ over the choice of training sample $\samp$, for all distributions $\mathcal{Q}$ over $\cH$ (even such that depend on $\samp$) we have 
\begin{align}
\label{eq: PAC-Bayes}
\expect_{\hat{h}\sim \mathcal{Q}}\; \expect_{(\x,\y) \sim \cD_\cZ} \big[ \ell(\hat{h}, (\x,\y)) \big] 
\leq 
\expect_{\hat{h}\sim \mathcal{Q}}\; \expect_{(\x,\y) \sim \mathrm{Unif}(\samp_T)} \big[ \ell(\hat{h}, (\x,\y)) \big] 
+ \sqrt{\frac{\mathrm{KL}\left(  \mathcal{Q} || \mathcal{P}\right) + \log(\frac{m}{\delta}) 
}{2(m-1)}}.
\end{align}
Note that \Cref{app: theorem: PAC-Bayes} bounds the generalization error of a stochastic predictor $\hat{h} \sim \mathcal{Q}$. 
\end{theorem}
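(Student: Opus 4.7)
The plan is to follow the classical McAllester-style argument, which combines three ingredients: (i) a pointwise exponential concentration bound for $\hat{R}(h)-R(h)$ over the draw of $\samp$, valid for any \emph{fixed} $h$; (ii) the Donsker--Varadhan change-of-measure inequality to transfer a statement about a data-independent prior $\mathcal{P}$ to one about an arbitrary (possibly data-dependent) posterior $\mathcal{Q}$; and (iii) Markov's inequality to turn an $\mathcal{P}$-averaged statement in expectation into one holding with high probability over $\samp$.

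The key pointwise step is to establish, for each fixed $h \in \cH$, the exponential moment inequality
\begin{equation*}
\expect_{\samp \sim \cD^m}\left[\exp\left(2(m-1)\left(\expect_{\vc z \sim \cD}[\ell(h,\vc z)] - \tfrac{1}{m}\textstyle\sum_{i=1}^m \ell(h,\vc z_i)\right)^2\right)\right] \leq m.
\end{equation*}
This is a standard consequence of Hoeffding's lemma applied to the $[0,1]$-valued i.i.d.\ random variables $\ell(h,\vc z_i)$, together with a symmetrization/integration trick (one bounds the moment generating function of the centered mean and integrates over the exponent). I would cite or reproduce the short argument here; this is the computational core but is not conceptually hard.

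Next, I would take expectation of the above with respect to $h\sim \mathcal{P}$ (noting that $\mathcal{P}$ is independent of $\samp$), apply Fubini to swap the expectations, and then use Markov's inequality: with probability at least $1-\delta$ over $\samp$,
\begin{equation*}
\expect_{h \sim \mathcal{P}}\left[\exp\left(2(m-1)\left(R(h) - \hat{R}(h)\right)^2\right)\right] \leq \frac{m}{\delta}.
\end{equation*}
Here $R(h):=\expect_{\vc z \sim \cD}[\ell(h,\vc z)]$ and $\hat{R}(h):=\frac{1}{m}\sum_i \ell(h,\vc z_i)$. This yields a data-dependent statement about $\mathcal{P}$, but the theorem requires one about the arbitrary $\mathcal{Q}$.

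The bridge is the Donsker--Varadhan (or Gibbs) variational inequality: for any measurable $\phi:\cH\to\mathbb R$ and any pair $(\mathcal{Q},\mathcal{P})$ with $\mathcal{Q}\ll\mathcal{P}$,
\begin{equation*}
\expect_{h \sim \mathcal{Q}}[\phi(h)] \leq \mathrm{KL}(\mathcal{Q}\|\mathcal{P}) + \log \expect_{h \sim \mathcal{P}}[\exp(\phi(h))].
\end{equation*}
Applying this with $\phi(h) = 2(m-1)(R(h)-\hat{R}(h))^2$ and combining with the Markov step gives, with probability $1-\delta$ over $\samp$, simultaneously for every posterior $\mathcal{Q}$,
\begin{equation*}
2(m-1)\,\expect_{h \sim \mathcal{Q}}\!\left[(R(h) - \hat{R}(h))^2\right] \leq \mathrm{KL}(\mathcal{Q}\|\mathcal{P}) + \log(m/\delta).
\end{equation*}
Finally, Jensen's inequality on the convex map $x\mapsto x^2$ yields $\bigl(\expect_{\mathcal{Q}}[R(h)-\hat{R}(h)]\bigr)^2 \le \expect_{\mathcal{Q}}[(R(h)-\hat{R}(h))^2]$; solving for the linear gap and taking a square root gives exactly the stated bound. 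The only real obstacle is keeping the constants straight in step (i); the DV inequality and Markov steps are routine, and the simultaneous-in-$\mathcal{Q}$ property is automatic because the high-probability event in the Markov step is about $\mathcal{P}$ alone.
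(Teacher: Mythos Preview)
The paper does not actually prove this theorem: it is stated as a cited classical result from \cite{McAllester1998SomePT, ShalevShwartz2014UnderstandingML} and used as a black box, with no accompanying proof in the text. Your proposal is a correct outline of the standard textbook argument (essentially the proof found in the cited references), so there is nothing to compare against in the paper itself.
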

\subsection{De-randomized PAC-Bayes theorems}\label{app: subsec: derandom}
Let $\cD$ be an arbitrary distribution over data $\cZ = \cX \times \cY$ and let $\samp = \{\vc{z}_1, \ldots, \vc{z}_m\}$ be a set of i.i.d training samples from $\cD$. 
Let $\cH$ be a hypothesis class and let $\mathcal{P}$ be a prior distribution over $\mathcal{H}$ and let $\delta \in (0,1)$. Consider a fixed predictor $h \in \cH$ (possibly trained on data).  
Let $\mathcal{Q}(h, \Sigma)$ be any distribution over $\cH$ (possibly data dependent) with mean $h$ and covariance $\Sigma$. 
Let $\rho_n(h, \z)$ for $1\leq n \leq N$ be certain properties of the predictor $h$ on data point $\z$ and let $\gamma_n >0$ be its associated margin.  
\begin{definition}(Noise-resilience, \cite{nagarajan2018deterministic})
A predictor $h$ is said to be \textit{noise-resilient} at a given data point $\z$  w.r.t properties $\rho_n$ if, 
\begin{equation}\label{eq: noise-res}
\prob_{\hat{h} \sim \mathcal{Q}(h, \Sigma)} 
\big[ 
\exists n : |\rho_n(\hat{h},\z) - \rho_n(h, \vc{z})| > \frac{\gamma_n}{2} 
\big] 
\leq \frac{1}{\sqrt{m}}.
\end{equation}
Let $\mu_\cD \left(h, \{ (\rho_n , \gamma_n) \}_{n=1}^N \right)$ denote the probability that $h$ is not noise-resilient at a randomly drawn $\z \sim \cD$, 
\[
\mu_\cD \left(h, \{ (\rho_n , \gamma_n) \}_{n=1}^N \right) := 
\prob_{\z \sim \cD} 
\bigg
[ \prob_{\hat{h} \sim \mathcal{Q}(h, \Sigma)} 
\big[ 
\exists n : |\rho_n(\hat{h},\z) - \rho_n(h, \vc{z})| > \frac{\gamma_n}{2} 
\big] 
> \frac{1}{\sqrt{m}}.
\bigg]
\]
Similarly let $\mu_{\samp} \left(h, \{ (\rho_n , \gamma_n) \}_{n=1}^N \right)$ denote the probability that $h$ is not noise-resilient at a randomly drawn training sample $\z \sim \mathfrak{U}(\samp)$, 
\[
\mu_{\samp_T} \left(h, \{ (\rho_n , \gamma_n) \}_{n=1}^N \right) := 
\prob_{\z \sim \mathfrak{U}(\samp_T)} 
\bigg
[ \prob_{\hat{h} \sim \mathcal{Q}(h, \Sigma)} 
\big[ 
\exists n : |\rho_n(\hat{h},\z) - \rho_n(h, \vc{z})| > \frac{\gamma_n}{2} 
\big] 
> \frac{1}{\sqrt{m}}.
\bigg]
\]
\end{definition} 
\begin{lemma}\cite[Theorem C.1]{nagarajan2018deterministic}\label[lemma]{lemma: derandomization}
For some fixed margin hyper-parameters $\{\gamma_n\}_{n=1}^N$, with probability at least $(1-\delta)$ over the draw of training sample $\samp$, for any predictor $h$ and any , we have, 
\begin{align}
\label{eq: De-random}
&\prob_{\z \sim \cD} \bigg[ \exists n : \rho_n(h, \z) < 0 \bigg] \\
\nonumber & \leq \frac{1}{m} \sum_{\z^{(i)} \in \samp_T} \mathbf{1}\bigg[ \exists n : \rho_n(h, \z^{(i)}) < \gamma_n \bigg]
+ \mu_{\samp_T} \left(h, \{ (\rho_n , \gamma_n) \}_{n=1}^N \right)
+ \mu_\cD \left(h, \{ (\rho_n , \gamma_n) \}_{n=1}^N \right)\\
\nonumber & + 2\sqrt{\frac{2\mathrm{KL}\left(  \mathcal{Q}(h,\Sigma) \;|| \;\mathcal{P}\right) + \log(\frac{2m}{\delta}) 
}{2(m-1)}} + \frac{2}{\sqrt{m}-1}. 
\end{align}
\end{lemma}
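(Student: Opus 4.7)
The plan is to reduce the statement to a single application of the classical PAC-Bayes bound (\Cref{app: theorem: PAC-Bayes}) applied to a surrogate stochastic loss, and then translate the stochastic inequality back to the deterministic predictor $h$ using the noise-resilience condition on both the population and empirical sides. Specifically, I would fix the surrogate loss
\[
\ell(\hat h, \z) \;:=\; \mathbf{1}\bigl[\exists\, n: \rho_n(\hat h, \z) < \tfrac{\gamma_n}{2}\bigr],
\]
which is bounded in $[0,1]$, and apply \Cref{app: theorem: PAC-Bayes} to the posterior $\mathcal Q(h,\Sigma)$ against the prior $\mathcal P$. With probability at least $1-\delta$ over $\samp$, this gives
\[
\expect_{\hat h \sim \mathcal Q}\expect_{\z\sim \cD}[\ell(\hat h,\z)]
\;\leq\;
\expect_{\hat h \sim \mathcal Q} \tfrac{1}{m}\textstyle\sum_{i}\ell(\hat h, \z^{(i)})
\;+\; \sqrt{\tfrac{\mathrm{KL}(\mathcal Q\|\mathcal P)+\log(m/\delta)}{2(m-1)}}.
\]

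Next I would upper-bound the empirical stochastic risk by the deterministic empirical quantity plus the sample-level non-resilience fraction. For any $\z^{(i)}$ that is simultaneously noise-resilient and satisfies $\rho_n(h,\z^{(i)})\geq \gamma_n$ for all $n$, the event $\rho_n(\hat h,\z^{(i)})<\gamma_n/2$ forces a deviation larger than $\gamma_n/2$, which by definition of noise-resilience occurs with probability at most $1/\sqrt m$ under $\hat h\sim\mathcal Q$. Splitting the sum according to whether $\z^{(i)}$ is noise-resilient and whether it has margin $\gamma_n$, one gets
\[
\tfrac{1}{m}\textstyle\sum_i \expect_{\hat h}[\ell(\hat h,\z^{(i)})]
\;\leq\; \mu_{\samp_T}\!\left(h,\{(\rho_n,\gamma_n)\}\right) \;+\; \hat R_\gamma(h) \;+\; \tfrac{1}{\sqrt m}.
\]

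On the population side I would go the other direction: for any $\z$ at which $h$ is noise-resilient and some $\rho_n(h,\z)<0$ holds, the perturbed quantity $\rho_n(\hat h,\z)$ stays below $\gamma_n/2$ with probability at least $1-1/\sqrt m$, hence $\expect_{\hat h}[\ell(\hat h,\z)]\geq 1-1/\sqrt m$ at such $\z$. Integrating over $\z\sim \cD$ and isolating the ``deterministic-bad and noise-resilient'' event yields
\[
\Pr_{\z\sim\cD}[\exists n: \rho_n(h,\z)<0]
\;\leq\; \mu_{\cD}\!\left(h,\{(\rho_n,\gamma_n)\}\right) \;+\; \tfrac{1}{1-1/\sqrt m}\,\expect_{\hat h,\z}[\ell(\hat h,\z)].
\]
Since $\expect[\ell]\leq 1$, the factor $\tfrac{1}{1-1/\sqrt m}=1+\tfrac{1}{\sqrt m-1}$ contributes an additive $\tfrac{1}{\sqrt m-1}$ beyond $\expect[\ell]$.

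Chaining the three inequalities, the two $1/\sqrt m$ remainders combine into the $\tfrac{2}{\sqrt m-1}$ slack, and the PAC-Bayes penalty carries through (with a factor $2$ arising from inflating $\delta\mapsto \delta/2$ when one wishes to apply PAC-Bayes with an intermediate concentration step that produces the $2\sqrt{\cdot}$ and $2\,\mathrm{KL}$ constants displayed in the statement). The main obstacle, conceptually, is handling both tail directions of the stochastic-to-deterministic translation simultaneously and in a way that does not inflate the constants unnecessarily; executing this cleanly is precisely what dictates the choice of the half-margin threshold $\gamma_n/2$ inside $\ell$ and the $1/\sqrt m$ threshold inside the definition of $\mu_\cD, \mu_{\samp_T}$. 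Everything else is bookkeeping of the constants and a union bound if the PAC-Bayes inequality is invoked on both tails separately.
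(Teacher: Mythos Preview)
Your proposal is essentially correct and follows the standard derandomization argument of \cite{nagarajan2018deterministic}. Note, however, that the paper does \emph{not} provide its own proof of this lemma: it is quoted verbatim as \cite[Theorem~C.1]{nagarajan2018deterministic} and used as a black box in the proof of \Cref{theorem: sparsity-aware-expanded}. There is therefore nothing in the paper to compare your argument against beyond the statement itself.

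That said, your sketch recovers the right structure: apply PAC-Bayes to the half-margin surrogate $\ell(\hat h,\z)=\mathbf{1}[\exists n:\rho_n(\hat h,\z)<\gamma_n/2]$, then sandwich the stochastic risk between the deterministic population and empirical quantities using the noise-resilience threshold $1/\sqrt m$ on each side. The two $1/\sqrt m$-type remainders correctly combine into $2/(\sqrt m-1)$ via $1/\sqrt m \le 1/(\sqrt m -1)$. The one place you are slightly loose is the bookkeeping of the PAC-Bayes constant: the displayed term $2\sqrt{(2\mathrm{KL}+\log(2m/\delta))/(2(m-1))}$ does not arise from the one-sided bound in \Cref{app: theorem: PAC-Bayes} merely by halving $\delta$; in \cite{nagarajan2018deterministic} it comes from a slightly different (two-sided, Pinsker-type) PAC-Bayes inequality. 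Your parenthetical acknowledges this, but if you were writing out a full proof you would need to invoke that specific variant rather than \Cref{app: theorem: PAC-Bayes} to recover the exact constants in the statement.
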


In the above lemma, the loss function $\ell(h, \z) := \mathbf{1}\left\{ \exists\; n, \;\rho(h,\z) < 0 \right\}$.
\Cref{lemma: derandomization} directly bounds the generalization of a predictor $h$ rather than a stochastic predictor $\hat{h} \sim \mathcal{Q}(h, \Sigma)$. 

\section{Sparse Norm, Reduced Babel Function and Gaussian Concentration}\label{app: sparse-norm}
\begin{lemma}\label[lemma]{app: lemma: monotone-sparse-norm}
For any matrix $\W \in \mathbb{R}^{d_2 \times d_1}$ and any two sparsity levels such that $(s_2, s_1) \preceq (\hat{s}_2, \hat{s}_1)$, 
$\norm{\W}_{(\hat{s}_2, \hat{s}_1)} \leq \norm{\W}_{(s_2, s_1)}$
\end{lemma}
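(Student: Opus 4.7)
(Proof sketch for \Cref{app: lemma: monotone-sparse-norm})
The plan is to show the inequality directly from the definition of the sparse induced norm by exploiting the fact that the operator norm of a submatrix is monotone under extension: enlarging a submatrix (by adding rows or columns) can only increase its induced $2$-norm.

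First I would fix index sets $\hat{J}_2 \subseteq [d_2]$ and $\hat{J}_1 \subseteq [d_1]$ with $|\hat{J}_2| = d_2 - \hat{s}_2$ and $|\hat{J}_1| = d_1 - \hat{s}_1$ achieving the maximum in $\norm{\W}_{(\hat{s}_2,\hat{s}_1)}$. Since $(s_2, s_1) \preceq (\hat{s}_2, \hat{s}_1)$ we have $d_2 - \hat{s}_2 \le d_2 - s_2$ and $d_1 - \hat{s}_1 \le d_1 - s_1$, so I can pick supersets $J_2 \supseteq \hat{J}_2$ and $J_1 \supseteq \hat{J}_1$ with $|J_2| = d_2 - s_2$ and $|J_1| = d_1 - s_1$.

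Next I would verify the submatrix monotonicity claim: for any $\vc v \in \mathbb R^{|\hat J_1|}$, zero-extend it to $\tilde{\vc v} \in \mathbb R^{|J_1|}$ by placing zeros on $J_1 \setminus \hat J_1$. Then
\[
\mathcal{P}_{\hat J_2, \hat J_1}(\W)\,\vc v \;=\; \mathcal{P}_{\hat J_2 \hookrightarrow J_2}\!\left( \mathcal{P}_{J_2,J_1}(\W)\,\tilde{\vc v} \right),
\]
where $\mathcal{P}_{\hat J_2 \hookrightarrow J_2}$ further restricts rows from $J_2$ to $\hat J_2$. Since row restriction is a coordinate projection, it is $1$-Lipschitz in the $\ell_2$ norm, and $\|\tilde{\vc v}\|_2 = \|\vc v\|_2$. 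Taking a supremum over unit vectors $\vc v$ therefore gives $\norm{\mathcal{P}_{\hat J_2, \hat J_1}(\W)}_2 \leq \norm{\mathcal{P}_{J_2, J_1}(\W)}_2$.

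Finally I would chain the two observations: the right-hand side is bounded above by $\norm{\W}_{(s_2,s_1)}$ since $(J_2,J_1)$ is a feasible index pair in the maximum defining this norm. Since $(\hat J_2, \hat J_1)$ was a maximizer, this yields $\norm{\W}_{(\hat s_2,\hat s_1)} \leq \norm{\W}_{(s_2,s_1)}$. The argument is essentially bookkeeping; the only conceptual step is the submatrix monotonicity, but that follows immediately from writing submatrix multiplication as a zero-extension followed by a coordinate restriction, so I do not anticipate a real obstacle.
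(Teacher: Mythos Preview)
Your proof is correct and follows essentially the same approach as the paper: pick maximizing index sets for the finer sparsity level, extend them to feasible index sets for the coarser level, and use monotonicity of the operator norm under submatrix extension. Your version is in fact slightly more thorough, since you spell out the zero-extension argument for submatrix monotonicity that the paper's proof simply asserts.
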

\begin{proof}
Let $\hat{J}_1 \subseteq [d_1], |J_1| = d_1-\hat{s}_1$ and $\hat{J}_2 \subseteq [d_2], |\hat{J}_2| = d_2 - \hat{s}_2$ be two index sets such that $\norm{\W}_{(\hat{s}_2, \hat{s}_1)} = \norm{\mathcal{P}_{\hat{J}_2, \hat{J}_1}(\W)}_2$. 
Consider any two extended index sets $J_1 \subseteq [d_1], |J
_1| = d_1-s_1$ and $J_2 \subset [d_2], |J_2| = d_2-s_2$ such that $\hat{J}_1 \subseteq J_1$ and $\hat{J}_2 \subseteq J_2$. Then, 
\begin{align*}
\norm{\W}_{(\hat{s}_2, \hat{s}_1)} 
&= \norm{\mathcal{P}_{\hat{J}_2, \hat{J}_1}(\W)}_2 \\
&\leq \norm{\mathcal{P}_{J_2, J_1}(\W)}_2 \\
&\leq \max_{J_2 \subset [d_2], |J_2| = d_2-s_2} \max_{J_1 \subseteq [d_1], |J
_1|= d_1-s_1} \norm{\mathcal{P}_{J_2, J_1}(\W)}_2 \\
&=: \norm{\W}_{(s_2,s_1)}.
\end{align*}
\end{proof}
Recall the definition of reduced babel function from \ref{def: red-babel}, 
\[
\mu_{s_2,s_1}(\W) := 
\frac{1}{\norm{\W}^2_{(d_2-1, s_1)}}\;
\underset{\substack{J_2 \subset [d_2],\\ |J_2|=d_2-s_2}}{\max} \;
\max_{j \in J_2} 
\Bigg[
\sum_{\substack{i \in J_2,\\ i \neq j}}\;
\underset{\substack{J_1 \subseteq [d_1]\\ |J_1| = d_1 - s_1}}{\max}
\left\lvert\mathcal{P}_{J_1}(\vc{w}[i]) \mathcal{P}_{J_1}(\vc{w}[j])^T
\right\rvert 
\Bigg],
\]
To compute this we note, that maximum reduced inner product $\underset{\substack{J_1 \subseteq [d_1]\\ |J_1| = d_1 - s_1}}{\max}
\left\lvert\mathcal{P}_{J_1}(\vc{w}[i]) \mathcal{P}_{J_1}(\vc{w}[j])^T\right\rvert$ can be computed taking the sum of the top-k column indices in an element-wise product of rows $\vc{w}[i]$ and $\vc{w}[j]$. 
The full algorithm to compute the babel function is described in \Cref{alg: red-babel}. One can note that it has a computational complexity of $\mathcal{O}(d_2^2 d_1)$ for each $\mu_{s_2, s_1}(\W)$. Further optimizations that leverage PyTorch broadcasting are possible. The reduced babel function is useful as it provides a bound on the sparse norm. 
\begin{algorithm}[hbt!]
\caption{Computing the Reduced Babel function}\label{alg: red-babel}
\begin{algorithmic}
\State \textbf{Require} :  Weight matrix $W \in \mathbb{R}^{d_2 \times d_1}$, sparsity levels $s_2 \in d_2-1$ and $0 \leq s_1 \leq d_1-1$. 
\State \textbf{Ensure} :  The reduced babel function at specified sparsity, $\mu_{s_2, s_1}(\W)$.
\State \textbf{Initialize} : A vector of Gerschgorin disk radii, $\vc{r} = \mathbf{0}_{d_0}$
\State \textbf{Initialize} : A matrix of maximum reduced inner products, $\mt{A} = \mathbf{0}_{(d_2 \times d_2)}$. 
\State \textbf{Initialize}: A vector of top-k elementwise squares $\vc{t} = \mathbf{0}_{d_2}$
\State $\vc{t}[i] = \textsc{sum}(\textsc{Top-k}(\vc{w}[i] \circ \vc{w}[i], d_1-s_1))$
\State $\norm{\W}_{d_2-1,s_1} = \sqrt{\max_i \vc{t}[i]}$
\State Compute maximum reduced inner product for each $(i,j)$
\For{$1\leq i\leq d_2$}
    \For{$1\leq j \leq d_2,\; j\neq i$}
        \State positive = \textsc{Top-k}$(\vc{w}[i] \circ \vc{w}[j], d_1-s_1)$
        \State negative = \textsc{Top-k}$(\vc{w}[i] \circ \vc{w}[j], d_1-s_1)$
        \State $A[i, j] = \max\{\textsc{Sum}(\text{positive}),\;\textsc{Sum}(\text{negative})\}$.
    \EndFor
\EndFor

\State Compute gerschgorin radii.
\For{$1\leq i \leq d_2-1$}
    \State $r[i] = \textsc{SUM}( \textsc{Top-k}(\vc{a}[i], d_1-s_1   )  )$
\EndFor

\State \textbf{Return}: $\mu_{s_2, s_1}(\W) = \frac{\max_i \vc{r}[i]}{\norm{\W}_{d_2-1,s_1}}$.

\end{algorithmic}
\end{algorithm}

\begin{lemma}\cite[Lemma 3]{https://doi.org/10.48550/arxiv.2202.13216}
\label[lemma]{lemma: bound-submatrix-norm}
For any matrix $\W \in \mathbb{R}^{d_2 \times d_!}$, 
the sparse norm can be bounded as
\[
\norm{\W}_{(s_2,s_1)} \leq \norm{\W}_{(d_2-1,s_1)} \sqrt{1+\mu_{s_2, s_1}(\W)}.
\]
\end{lemma}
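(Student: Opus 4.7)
The plan is to apply a Gerschgorin-style argument to an arbitrary reduced sub-matrix of $\W$, following the classical babel-function technique of \cite{tropp2003improved} adapted to the sparse setting.

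First, I would fix any index sets $J_2 \subseteq [d_2]$ with $|J_2| = d_2 - s_2$ and $J_1 \subseteq [d_1]$ with $|J_1| = d_1 - s_1$ that attain the maximum in the definition of $\|\W\|_{(s_2,s_1)}$, and write $\mt{M} := \mathcal{P}_{J_2, J_1}(\W)$. Since $\|\mt{M}\|_2^2 = \|\mt{M}\mt{M}^T\|_2$ and $\mt{M}\mt{M}^T$ is symmetric, it suffices to upper bound the largest eigenvalue of the Gram-like matrix $\mt{G} := \mt{M}\mt{M}^T \in \mathbb{R}^{|J_2| \times |J_2|}$.

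Next, I would read off the entries of $\mt{G}$: the diagonal entries are $G_{jj} = \|\mathcal{P}_{J_1}(\vc{w}[j])\|_2^2$ and the off-diagonal entries are $G_{ij} = \mathcal{P}_{J_1}(\vc{w}[i]) \mathcal{P}_{J_1}(\vc{w}[j])^T$. Each diagonal entry is bounded above by $\|\W\|_{(d_2-1, s_1)}^2$ by the definition of that sparse norm. For the off-diagonal row sums, observe that for any fixed $j \in J_2$,
\begin{align*}
\sum_{\substack{i \in J_2 \\ i \neq j}} |G_{ij}|
\;\leq\; \sum_{\substack{i \in J_2 \\ i \neq j}} \max_{|J_1'| = d_1 - s_1} \left| \mathcal{P}_{J_1'}(\vc{w}[i]) \mathcal{P}_{J_1'}(\vc{w}[j])^T \right|
\;\leq\; \|\W\|_{(d_2-1, s_1)}^2 \cdot \mu_{s_2, s_1}(\W),
\end{align*}
where the last step is just the definition of the reduced babel function (after taking the maximum over $J_2$ and over $j \in J_2$).

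Finally, I would invoke Gerschgorin's circle theorem on $\mt{G}$: every eigenvalue is contained in the union of disks centered at $G_{jj}$ with radius $\sum_{i \neq j} |G_{ij}|$, which yields
\begin{equation*}
\|\mt{G}\|_2 \;\leq\; \max_{j \in J_2} \left( G_{jj} + \sum_{\substack{i \in J_2 \\ i \neq j}} |G_{ij}| \right) \;\leq\; \|\W\|_{(d_2-1, s_1)}^2 \bigl( 1 + \mu_{s_2, s_1}(\W) \bigr).
\end{equation*}
Taking the square root gives $\|\mt{M}\|_2 \leq \|\W\|_{(d_2-1, s_1)} \sqrt{1 + \mu_{s_2, s_1}(\W)}$, and since $(J_2, J_1)$ was chosen to realize $\|\W\|_{(s_2, s_1)}$, the claim follows. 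The only delicate step is matching the quantifier order in the definition of $\mu_{s_2, s_1}$: each off-diagonal contribution is bounded individually by a term with its own optimizing $J_1'$, which is a looser (hence valid) bound than insisting on a common $J_1$; this is the reason the definition places the $\max$ over $J_1$ \emph{inside} the sum, and is the one place where care is needed to avoid circularity.
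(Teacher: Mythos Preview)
Your proof is correct and follows essentially the same approach as the paper: the paper's own proof simply defers to \cite{https://doi.org/10.48550/arxiv.2202.13216}, and its accompanying \Cref{alg: red-babel} explicitly computes ``Gerschgorin disk radii,'' confirming that the intended argument is precisely the Gerschgorin bound on $\mt{M}\mt{M}^T$ that you carry out.
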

\begin{proof}
Despite the slight modifications to the reduced babel function definition and the novel sparse norm definition $\norm{\W}_{(d_2-1,s_1)}$, the proof follows an identical series of steps as in \cite{https://doi.org/10.48550/arxiv.2202.13216}.
\end{proof} 
While \Cref{lemma: bound-submatrix-norm} presents a useful deterministic bound for the sparse norm. We can also present a high probability bound on the sparse norm of a Gaussian matrix. To start off we present some well-known lemmas, 

\begin{lemma}(Concentration of norm of a sub-Gaussian sub-Matrix) \label[lemma]{lemma:submatrixnorm}
The operator norm of a sub-matrix indexed by sets $J_2 \subseteq [d_2]$ of size $(d_2-s_1)$ and $J_1\subseteq [d_1]$ of size $(d_1-s_1)$ is bounded in high probability, 
\[
\prob\left(\norm{\mathcal{P}_{J_2, J_1}(\mt{A})}_2 \geq \sigma(\sqrt{d_2-s_2}+\sqrt{d_1-s_1}+ t) \right) \leq e^{-\frac{t^2}{2}}, \quad \forall~t\geq 0.
\]
\end{lemma}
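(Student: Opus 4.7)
The plan is to reduce the claim to the classical concentration inequality for the operator norm of a Gaussian matrix (Davidson--Szarek). The key observation is that once we fix index sets $J_2$ and $J_1$ of the prescribed sizes, the restricted matrix $\mt{B} := \mathcal{P}_{J_2, J_1}(\mt{A}) \in \mathbb{R}^{(d_2 - s_2) \times (d_1 - s_1)}$ has i.i.d.\ $\mathcal{N}(0, \sigma^2)$ entries, inherited from the assumed distribution of $\mt{A}$. Thus the question reduces entirely to a tail bound for the operator norm of a rectangular Gaussian matrix of dimensions $(d_2 - s_2) \times (d_1 - s_1)$.

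The tail bound follows by combining two standard ingredients. First, by Gordon's theorem (a consequence of Slepian's inequality applied to the Gaussian process $(u, v) \mapsto u^T \mt{B} v$ on the product of unit spheres), the expected spectral norm satisfies
\[
\mathbb{E}\bigl[\|\mt{B}\|_2\bigr] \leq \sigma\bigl(\sqrt{d_2 - s_2} + \sqrt{d_1 - s_1}\bigr).
\]
Second, the map $\mt{B} \mapsto \|\mt{B}\|_2$ is $1$-Lipschitz with respect to the Frobenius norm, since $\|\mt{B}\|_2 = \sup_{\|u\|=\|v\|=1} u^T \mt{B} v$ is a supremum of functions each $1$-Lipschitz in $\mt{B}$ (by Cauchy--Schwarz). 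Viewing $\mt{B}/\sigma$ as a standard Gaussian vector in $\mathbb{R}^{(d_2-s_2)(d_1-s_1)}$ and applying the Borell--Tsirelson--Ibragimov--Sudakov Gaussian concentration inequality for Lipschitz functions then yields
\[
\mathbb{P}\bigl(\|\mt{B}\|_2 - \mathbb{E}[\|\mt{B}\|_2] \geq \sigma t\bigr) \leq e^{-t^2/2}.
\]

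Chaining the expectation bound with the concentration inequality gives the claimed statement. There is no real obstacle here: both ingredients are classical, and the only subtlety is to record that restriction to a fixed index set preserves the i.i.d.\ Gaussian structure of the entries, so the ambient dimensions $d_2, d_1$ are effectively replaced by the reduced dimensions $d_2 - s_2$ and $d_1 - s_1$ in the bound. This is precisely what is needed for the downstream union-bound argument in \Cref{app: lemma: sparse-norm}, where one sums over the $\binom{d_2}{s_2}\binom{d_1}{s_1}$ choices of index sets to control the sparse induced norm uniformly.
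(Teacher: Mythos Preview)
Your proposal is correct and takes essentially the same approach as the paper: the paper's proof is a one-line citation to \cite[Theorem 6.1]{wainwright_2019} applied to the fixed submatrix, and what you have written is precisely an unpacking of that classical result (Gordon's expectation bound plus Gaussian Lipschitz concentration), together with the observation that restriction to fixed index sets preserves the i.i.d.\ Gaussian structure.
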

\begin{proof}
    This is a straightforward application of a classical result on the concentration of norm of Gaussian Matrix \cite[Theorem 6.1]{wainwright_2019} instantiated for the submatrix $\mathcal{P}_{J_2, J_1}(\mt{A})$. 
\end{proof}
\begin{lemma}(Concentration of sparse norm)\label[lemma]{app: lemma: sparse-norm}
For sparsity level $0 \leq s_2 \leq d_2-1$ and $0 \leq s_1 \leq d_1-1$, 
the operator norm of any sub-matrix indexed of size $(d_2-s_2) \times (d_1-s_1)$ is bounded in high probability, 
\[
\prob\left( \norm{\mt{A}}_{(s_2,s_1)} \geq \sigma(\sqrt{d_2 - s_2}+\sqrt{d_1-s_1}+ t) \right) \leq \binom{d_2}{s_2}\binom{d_1}{s_1}e^{-\frac{t^2}{2}}, \quad \forall~t\geq 0.
\]
Hence w.p. at least $(1-\delta)$, 
\[
\norm{\mt{A}}_{(s_2,s_1)}
 \leq \sigma\left(\sqrt{d_2-s_2}+\sqrt{d_1-s_1}) + \sqrt{2\log\binom{d_2}{s_2} + 2\log\binom{d_1}{s_1} + 2\log\left(\frac{1}{\delta}\right)}\right).
\]
\end{lemma}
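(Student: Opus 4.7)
The plan is to prove the concentration of the sparse norm by combining the sub-matrix concentration bound in \Cref{lemma:submatrixnorm} with a union bound over all index sets defining candidate sub-matrices. By definition,
\[
\norm{\mt{A}}_{(s_2,s_1)} = \max_{\substack{J_2 \subseteq [d_2] \\ |J_2|=d_2-s_2}} \; \max_{\substack{J_1 \subseteq [d_1] \\ |J_1|=d_1-s_1}} \norm{\mathcal{P}_{J_2,J_1}(\mt{A})}_2,
\]
and the maximum is over at most $\binom{d_2}{s_2}\binom{d_1}{s_1}$ pairs of index sets, since choosing $J_2$ of cardinality $d_2-s_2$ is equivalent to choosing the excluded subset of cardinality $s_2$, and likewise for $J_1$.

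First, I would fix any pair $(J_2,J_1)$ of index sets. The entries of $\mathcal{P}_{J_2,J_1}(\mt{A})$ are i.i.d.\ $\mathcal{N}(0,\sigma^2)$ (assuming $\mt{A}$ has i.i.d.\ Gaussian entries of variance $\sigma^2$, as in the setting of \Cref{lemma:submatrixnorm}), so that lemma directly gives, for every $t \geq 0$,
\[
\prob\left(\norm{\mathcal{P}_{J_2,J_1}(\mt{A})}_2 \geq \sigma\left(\sqrt{d_2-s_2}+\sqrt{d_1-s_1}+t\right)\right) \leq e^{-t^2/2}.
\]
Second, I would take a union bound over all valid pairs $(J_2,J_1)$:
\[
\prob\left(\norm{\mt{A}}_{(s_2,s_1)} \geq \sigma\left(\sqrt{d_2-s_2}+\sqrt{d_1-s_1}+t\right)\right) \leq \binom{d_2}{s_2}\binom{d_1}{s_1}\,e^{-t^2/2},
\]
which establishes the first claim.

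Finally, to obtain the high-probability statement, I would set the right-hand side equal to $\delta$ and solve for $t$. Taking logarithms yields
\[
t = \sqrt{2\log\binom{d_2}{s_2} + 2\log\binom{d_1}{s_1} + 2\log\tfrac{1}{\delta}},
\]
and substituting this value of $t$ into the event inside the probability gives the stated bound on $\norm{\mt{A}}_{(s_2,s_1)}$ that holds with probability at least $1-\delta$. There is no genuine obstacle in this argument; the only modeling point to be careful about is that \Cref{lemma:submatrixnorm} is stated for sub-Gaussian entries, so the tail bound applies uniformly to every submatrix with the same constants, which is precisely what makes the union bound clean. The mild looseness comes only from the combinatorial factors $\binom{d_2}{s_2}\binom{d_1}{s_1}$, which is unavoidable for a generic covering over all supports.
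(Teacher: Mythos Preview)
Your proposal is correct and follows essentially the same approach as the paper: apply \Cref{lemma:submatrixnorm} to each fixed submatrix, take a union bound over the $\binom{d_2}{s_2}\binom{d_1}{s_1}$ choices of index sets (using $\binom{d_i}{d_i-s_i}=\binom{d_i}{s_i}$), and then invert the tail bound to obtain the $(1-\delta)$ statement.
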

\begin{proof}
Recall that $\max_{\substack{|J_2|=d_2-s_2, \\ |J_1|=d_1-s_1   }} \norm{\mathcal{P}_{J_2, J_1}(\mt{A})}_2$. 
For each $S_2$, $S_1$, by \Cref{lemma:submatrixnorm}, we have that, 
\[
\prob\left(\norm{\mathcal{P}_{J_2, J_1}(\mt{A})}_2 \geq \sigma(\sqrt{d_2-s_2}+\sqrt{d_1-s_1}+ t) \right) \leq e^{-\frac{t^2}{2}}, \quad \forall~t\geq 0.
\]
Thus, 
\begin{align*}
    &\prob\left( \max_{\substack{|J_2|=d_2-s_2 \\ |J_1|=d_1-s_1   }} \norm{\mathcal{P}_{J_2, J_1}(\mt{A})}_2  \geq \sigma(\sqrt{d_2-s_2}+\sqrt{d_1-s_1}+ t)\right) \\
    &\leq 
    \prob\left( \exists J_2,J_1,~   \norm{\mathcal{P}_{J_2, J_1}(\mt{A})}_2 \geq \sigma(\sqrt{d_2-s_2}+\sqrt{d_1-s_1}+ t)\right) \\
    &\leq \sum_{\substack{|J_2|=d_2-s_2 \\ |J_1|=d_1-s_1   }} 
    \prob\left(| \norm{\mathcal{P}_{J_2, J_1}(\mt{A})}_2| \geq \sigma(\sqrt{d_2-s_2}+\sqrt{d_1-s_1}+ t)\right)\\
    &\leq \binom{d_2}{d_2-s_2}\binom{d_1}{d_1-s_1}e^{-\frac{t^2}{2}} = \binom{d_2}{s_2}\binom{d_1}{s_1}e^{-\frac{t^2}{2}}.
\end{align*}
Hence w.p. at least $(1-\delta)$, 
\begin{align*}
&\max_{\substack{|J_2|=d_2-s_2 \\ |J_1|=d_1-s_1   }} \norm{\mathcal{P}_{J_2, J_1}(\mt{A})}_2 
\leq \sigma\left(\sqrt{d_2-s_2}+\sqrt{d_1-s_1}) + \sqrt{2\log\binom{d_2}{s_2} + 2\log\binom{d_1}{s_1} + 2\log\left(\frac{1}{\delta}\right)}\right).
\end{align*}

\end{proof}

\section{Additional Experiments}\label{app:exp}
\begin{figure}[h]
    \centering
    \includegraphics[width=0.7\textwidth]{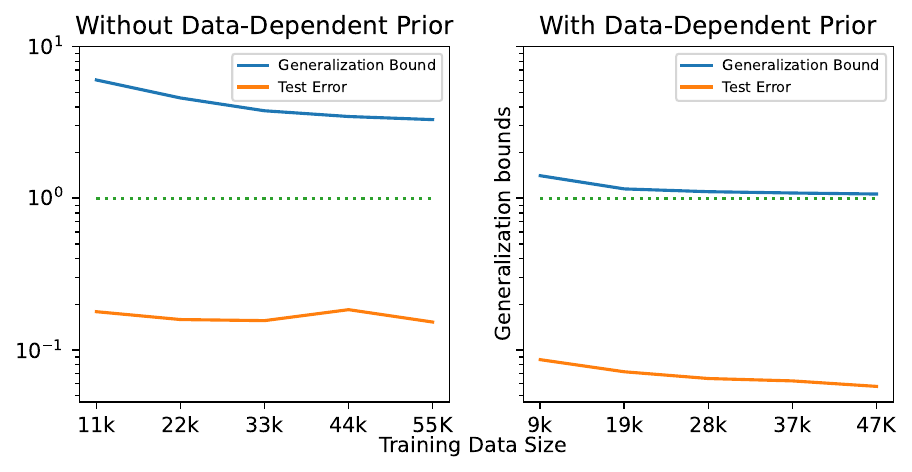}
    \caption{Generalization error of a 2-hidden layer model of width [100, 100] trained on MNIST}
    \label{fig:gen100_100}
\end{figure}

\begin{figure}[H]
    \centering
    \includegraphics[width=0.7\textwidth]{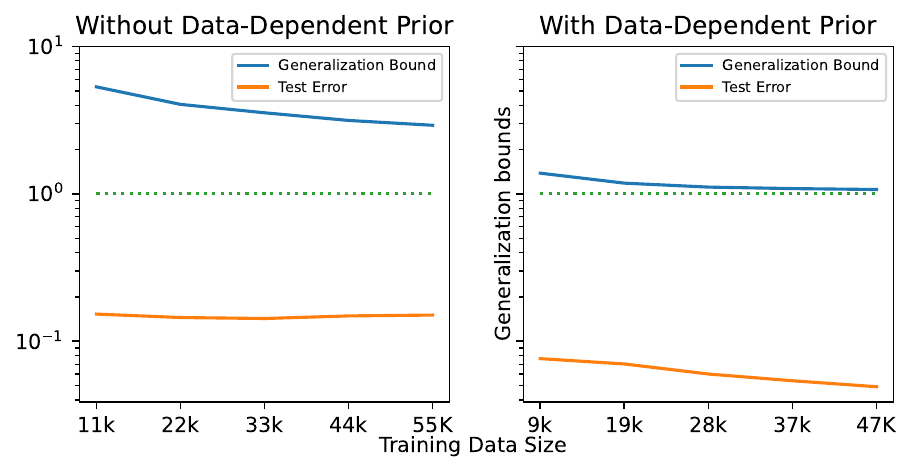}
    \caption{Generalization error of a 2-hidden layer model of width [500, 500] trained on MNIST}
    \label{fig:gen500_500}
\end{figure}

\begin{figure}
    \centering
\begin{minipage}{0.49\textwidth}
\includegraphics[width=\textwidth]{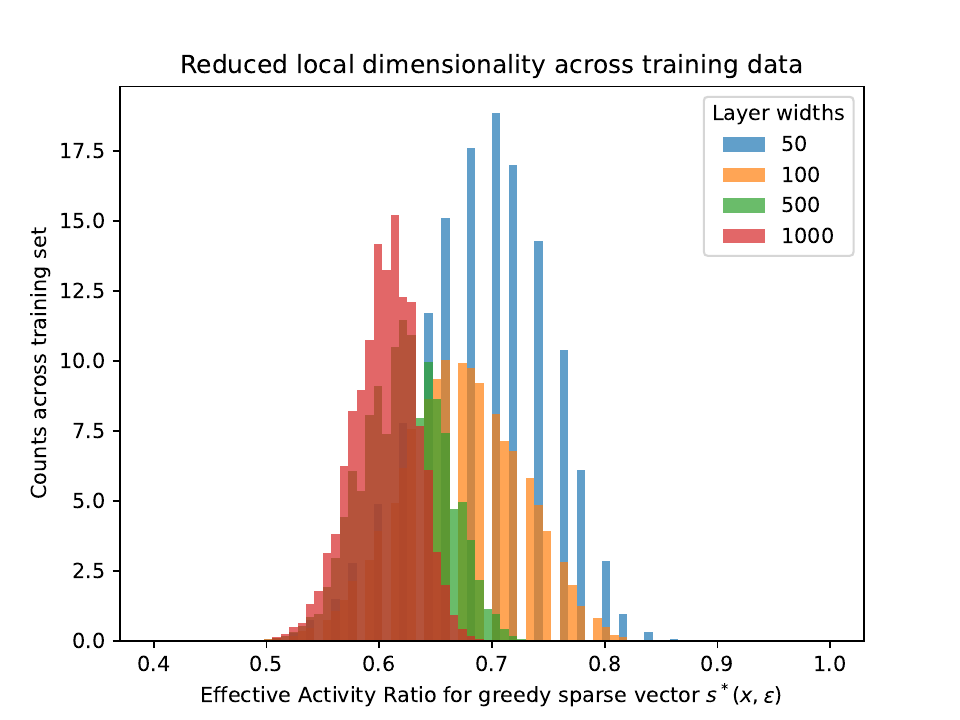}
\subcaption{Histogram of Effective Activity Ratio at $\epsilon = 10^{-4}$.}
\label{fig:HEA2}
\end{minipage}%
\begin{minipage}{0.49\textwidth}
\includegraphics[width=\textwidth]{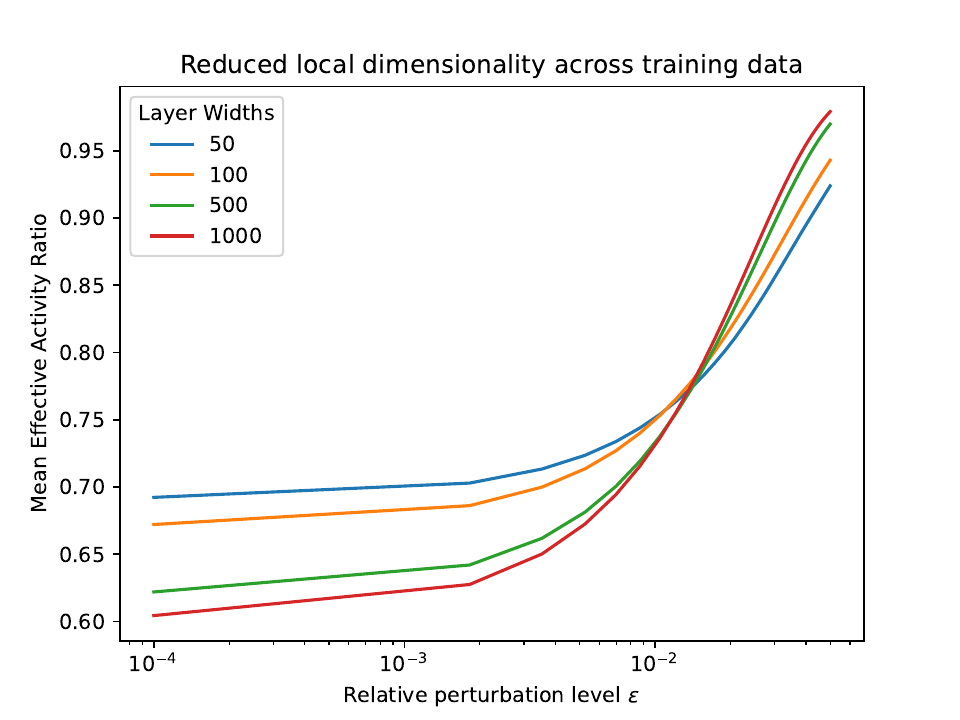}
\subcaption{Average Effective Activity Ratio.}
\label{fig:AEA2}
\end{minipage}
\caption{Effective Activity ratio $\kappa(\x, \epsilon)$ based on greedy sparsity vector $s^{*}(\x,\epsilon)$ for 2-layer networks (smaller implies sparser stable activations).}
\label{fig:2layer}
\end{figure}%

\end{document}